\def\thetagenie{\hat{\theta}(\mathcal{D}_N;x,y)}
\def\probthetagenie{p_{\thetagenie}(y|x)}
\def\probthetagenietag{p_{\thetagenietag}(y'|x)}
\def\thetagenie{\hat{\theta}(\mathcal{D}_N;x,y)}
\def\thetagenietag{\hat{\theta}(\mathcal{D}_N;x,y')}
\def\thetageniein#1{\hat{\theta}(\mathcal{D}_N;x,y=#1)}
\def\probthetagenie{p_{\thetagenie}(y|x)}
\newcommand{\norm}[1]{\left\lVert#1\right\rVert}
\def\Theoref#1{Theorem~\ref{#1}}
\def\appref#1{appendix~\ref{#1}}
\def\lemmaref#1{lemma~\ref{#1}}
\def\tableref#1{table~\ref{#1}}
\def\Tableref#1{Table~\ref{#1}}
\def\figref#1{figure~\ref{#1}}
\def\Figref#1{Figure~\ref{#1}}
\def\secref#1{section~\ref{#1}}
\def\eqref#1{(\ref{#1})}
\def\1{\bm{1}}
\DeclareMathAlphabet{\mathsfit}{\encodingdefault}{\sfdefault}{m}{sl}
\SetMathAlphabet{\mathsfit}{bold}{\encodingdefault}{\sfdefault}{bx}{n}
\newcommand{\normltwo}{L^2}
\DeclareMathOperator*{\argmin}{arg\,min}
\newtheorem{theorem}{Theorem}
\newtheorem{lemma}{Lemma}
\newcommand{\ignore}[1]{}
\newcommand{\minisection}[1]{\vspace{2mm}\noindent{\textbf{#1.}}}
\title{Single Layer Predictive Normalized Maximum Likelihood for Out-of-Distribution Detection}
\author{%
    Koby Bibas \\
    School of Electrical Engineering\\
    Tel Aviv University\\
    \texttt{kobybibas@gmail.com} \\
   \And
    Meir Feder \\
    School of Electrical Engineering\\
    Tel Aviv University\\
    \texttt{meir@eng.tau.ac.il} \\
   \AND
   Tal Hassner \\
   Facebook AI \\
   \texttt{talhassner@gmail.com} \\
}
\begin{document}

\maketitle

\begin{abstract}
Detecting out-of-distribution (OOD) samples is vital for developing machine learning based models for critical safety systems. Common approaches for OOD detection assume access to some OOD samples during training which may not be available in a real-life scenario. Instead, we utilize the {\em predictive normalized maximum likelihood} (pNML) learner, in which no assumptions are made on the tested input. We derive an explicit expression of the pNML and its generalization error, denoted as the {\em regret}, for a single layer neural network (NN). We show that this learner generalizes well when (i) the test vector resides in a subspace spanned by the eigenvectors associated with the large eigenvalues of the empirical correlation matrix of the training data, or (ii) the test sample is far from the decision boundary. Furthermore, we describe how to efficiently apply the derived pNML regret to any pretrained deep NN, by employing the explicit pNML for the last layer, followed by the softmax function. Applying the derived regret to deep NN requires neither additional tunable parameters nor extra data. We extensively evaluate our approach on 74 OOD detection benchmarks using DenseNet-100, ResNet-34, and WideResNet-40 models trained with CIFAR-100, CIFAR-10, SVHN, and ImageNet-30 showing a significant improvement of up to 15.6\% over recent leading methods.
\end{abstract}

\section{Introduction}
\label{sec:intro}

An important concern that limits the adoption of deep neural networks (DNN) in critical safety systems is how to assess our confidence in their predictions, i.e, quantifying their \textit{generalization capability}~\citep{DBLP:conf/bmvc/KaufmanBBCH19,willers2020safety}.
Take, for instance, a machine learning model for medical diagnosis~\citep{bibas2021learning}. 
It may produce (wrong) diagnoses in the presence of test inputs that are different from the training set rather than flagging them for human intervention~\citep{singh2021uncertainty}. 
Detecting such unexpected inputs had been formulated as the out-of-distribution (OOD) detection task~\citep{hendrycks17baseline}, as flagging test inputs that lie outside the training classes, i.e., are not \textit{in-distribution} (IND).

Previous learning methods that designed to offer such generalization measures, include VC-dimension~\citep{zhong2017recovery,vapnik2015uniform} and norm based bounds~\citep{DBLP:conf/iclr/NeyshaburBS18,DBLP:conf/nips/BartlettFT17}. 
As a whole, these methods characterized the generalization ability based on the properties of the parameters. However, they do not consider the test sample that is presented to the model~\citep{DBLP:conf/iclr/JiangNMKB20}, which makes them useless for OOD detection.
Other approaches build heuristics over the \textit{empirical risk minimization} (ERM) learner, by post-processing the model output~\citep{gram} or modifying the training process~\citep{PAPADOPOULOS2021138,vyas2018out}. Regardless of the approach, these methods choose the learner that minimizes the loss over the \textit{training set}. This may lead to a large generalization error because the ERM estimate may be wrong on unexpected inputs; especially with large models such as DNN~\citep{belkin2019reconciling}.

To produce a useful generalization measure, we exploit the \textit{individual setting} framework~\citep{merhav1998universal}. 
In the individual setting, there is no assumption about how the training and the test data are generated, nor about their probabilistic relationship.
Moreover, the relationship between the labels and data can be deterministic and may therefore be determined by an adversary.
The generalization error in this setting is often referred to as the \textit{regret}~\citep{merhav1998universal}.
This regret is defined by the log-loss difference between a learner and the \textit{genie}: a learner that knows the specific test label, yet is constrained to use an explanation from a set of possible models.
The individual setting is the most general framework and so the result holds for a wide range of scenarios. Specifically, the result holds to OOD detection where the distribution of the OOD inputs is unknown.

The pNML learner~\citep{fogel2018universal} was proposed as the min-max solution of the regret, where the minimum is over the learner choice and the maximum is for any possible test label value.
Intuitively, the pNML assigns a probability for a potential outcome as follows: Add the test sample to the training set with an arbitrary label, find the ERM solution of this new set, and take the probability it gives to the assumed label. 
Follow this procedure for every label and normalize to get a valid probability assignment.
The pNML was developed before for linear regression~\citep{bibas2019new} and was evaluated empirically for DNN~\citep{fu2021offline,bibas2019deep}.

We derive an analytical solution of the pNML learner and its generalization error (the regret) for a single layer NN.
We analyze the derived regret and show it obtains low values when the test input either (i) lies in a subspace spanned by the eigenvectors associated with the large eigenvalues of the training data empirical correlation matrix or (ii) is located far from the decision boundary.
Crucially, although our analysis focuses on a single layer NN, our results are applicable to the last layer of DNNs {\em without changing the network architecture or the training process}:
We treat the pretrained DNN as a feature extractor with the last layer as a single layer NN classifier.
We can therefore show the usage of the pNML regret as a confidence score for the OOD detection task.

To summarize, we make the following contributions.
\begin{enumerate}
    \item \textbf{Analytical derivation of the pNML regret.} We derive an analytical expression of the pNML regret, which is associated with the generalization error, for a single layer NN. 
    \item \textbf{Analyzing the pNML regret.}
     We explore the pNML regret characteristics as a function of the test sample data, training data, and the corresponding ERM prediction. We provide a visualization on low dimensional data and demonstrate the situations in which the pNML regret is low and the prediction can be trusted.
    \item \textbf{DNN adaptation.} We propose an adaptation of the derived pNML regret to {\em any} pretrained DNN that uses the softmax function with neither additional parameters nor extra data.
\end{enumerate}

Applying our pNML regret to a pretrained DNN does not require additional data, it is efficient and can be easily implemented.
The derived regret is theoretically justified for OOD detection since it is the individual setting solution for which we do not require any knowledge on the test input distribution.
Our evaluation includes 74 IND-OOD detection benchmarks using DenseNet-BC-100, ResNet-34, and WideResNet-40 trained with CIFAR-100, CIFAR-10, SVHN, and ImageNet-30.
Our approach outperforms leading methods in nearly all 74 OOD detection benchmarks up to a remarkable $+15.2$\% 


\section{Related work} 
\label{sec:related_work}

\minisection{OOD detection} 
Many recent work use additional OOD set in training~\citep{DBLP:conf/nips/MalininG18,DBLP:conf/iclr/HendrycksMD19,DBLP:conf/nips/NandyHL20,DBLP:conf/aaai/MohseniPYW20}.
A well-known method for OOD detection, named \textit{ODIN}~\citep{liang2017enhancing}, manipulates the input image based on the gradient of the loss function. 
This input manipulation increases the margin between the maximum probability of images with known classes and images with unknown objects.
However, the manipulation strength is defined by OOD data, which might be unavailable or different from the OOD samples in inference.


Different approaches propose to manipulate the loss function in the training phase.
\citet{lee2018simple} proposed to use the Mahalanobis distance between the test input representations and the class conditional distribution in the intermediate layers to train a logistic regression classifier to determine if the input sample is OOD.
The \textit{Energy} method~\citep{liu2020energy} adds a cost function to the training phase to shape the energy surface explicitly for OOD detection.
\citet{PAPADOPOULOS2021138} suggested the \textit{outlier exposure with confidence control} (OECC) method in which a regularization term is added to the loss function such that the model produces a uniform distribution for OOD samples.

All mentioned methods require manipulating the DNN architecture or adding additional data.
The \textit{Baseline} method~\citep{hendrycks17baseline} is one of the earliest approaches designed to identify whether a test input is IND and does not require changing the pretrained model. This method uses the maximum probability of the DNN output as the OOD score.
The \emph{Gram} method~\citep{gram} detects OODs based on feature representations obtained in intermediate layers. 
Given a test sample, the Gram matrices of the test sample features are compared with those of the training samples known to belong to the estimated class of the test sample. 
However, this method requires IND validation set and does not have a theoretical motivation.

Other work suggested using Bayesian techniques to estimate the prediction confidence~\citep{DBLP:conf/icml/GalG16,DBLP:conf/nips/Lakshminarayanan17,DBLP:conf/icml/AmersfoortSTG20}. However, the Bayesian techniques add extensive compute overhead to the prediction.

\minisection{The pNML learner}
The pNML learner is the min-max solution of the supervised batch learning in the individual setting~\citep{fogel2018universal}. For sequential prediction it is termed the conditional normalized maximum likelihood~\citep{rissanen2007conditional,roos2008sequentially}.

Several methods deal with obtaining the pNML learner for different hypothesis sets.
\citet{bibas2019new} and \citet{bibas2021predictive} showed the pNML solution for linear regression. \citet{rosas2020learning} proposed an NML based decision strategy for supervised classification problems and showed it attains heuristic PAC learning.
\citet{fu2021offline} used the pNML for model optimization based on learning a density function by discretizing the space and fitting a distinct model for each value.

For the DNN hypothesis set,
\citet{bibas2019deep} estimated the pNML distribution with DNN by fine-tuning the last layers of the network for every test input and label combination. This approach is computationally expensive since training is needed for every test input.
\citet{zhou2020amortized} suggested a way to accelerate the pNML computation in DNN by using approximate Bayesian inference techniques to produce a tractable approximation to the pNML.
\citet{pesso2021utilizing} used the pNML with adversarial target attack as a defense mechanism against adversarial perturbation.

\section{Notation and preliminaries} 
\label{sec:preliminaries}
In the supervised machine learning scenario, a training set consisting of $N$ pairs of examples is given
\begin{equation}
\mathcal{D}_N = \{(x_n,y_n)\}_{n=1}^{N}, \quad x_n \in {\cal X}, \quad y_n \in {\cal Y},
\end{equation}
where $x_n$ is the $n$-th data instance and $y_n$ is its corresponding label.
The goal of a learner is to predict the unknown test label $y$ given a new test data $x$ by assigning probability distribution $q(\cdot|x)$ to the unknown label. 
The performance is evaluated using the log-loss function
\begin{equation} \label{eq:logloss}
   \ell(q;x,y) = -\log {q(y|x)}.
\end{equation}

For the problem to be well-posed, we must make further assumptions on the class of possible models or {\em hypothesis set} that is used to find the relation between $x$ and $y$.
Denote $\Theta$ as a general index set, this class is a set of conditional probability distributions 
\begin{equation} \label{eq:hypotesis_set}
P_\Theta = \{p_\theta(y|x),\;\;\theta\in\Theta\}.
\end{equation}
The ERM is the learner from the hypothesis set that attains the minimal log-loss on the training set.

\minisection{The individual setting}
An additional assumption required to solve the problem is related to how the data and the labels are generated.
In this work we consider the individual setting \citep{merhav1998universal,fogel2018universal,bibas2019new,bibas2019deep},
where the data and labels, both in the training and test, are specific individual quantities: We do not assume any probabilistic relationship between them, the labels may even be assigned in an adversarial manner.  

\minisection{The genie} 
In the individual setting the goal is to compete with a reference learner, a genie, that has to following properties: (i) knows the test label value, (ii) is restricted to use a model from the given hypotheses set $P_\Theta$, and (iii) does not know which of the samples is the test. 
This reference learner then chooses a model that attains the minimum loss over the training set and the test sample
\begin{equation} \label{eq:genie_pnml} 
\hat{\theta}(\mathcal{D}_N;x,y)  = \arg\min_{\theta \in \Theta} \left[\ell\left(p_\theta;x,y\right) +\sum_{n=1}^N \ell\left(p_\theta;x_n,y_n\right) \right].
\end{equation}
The regret is the log-loss difference between a learner $q$ and this genie:
\begin{equation} \label{eq:regret}
R(q;\mathcal{D}_N;x,y) = - \log q(y|x) - \left[ -\log p_{\hat{\theta}(\mathcal{D}_N;x,y)}(y|x) \right].
\end{equation}

\begin{theorem}[\citet{fogel2018universal}]
\label{theroem:pnml}
The universal learner, denoted as the pNML, minimizes the regret for the worst case test label 
\begin{equation} \label{eq:minmax_prob}
\Gamma = R^*(\mathcal{D}_N,x) = \min_q \max_{y \in \mathcal{Y}} R(q;\mathcal{D}_N;x,y).
\end{equation}
The pNML probability assignment and regret are
\begin{equation} \label{eq:pNML}
q_{\mbox{\tiny{pNML}}}(y|x)= \frac{ \probthetagenie}{\sum_{y' \in \mathcal{Y}} \probthetagenietag},
\quad
\Gamma = \log \sum_{y' \in \mathcal{Y}} \probthetagenietag.
\end{equation}
\end{theorem}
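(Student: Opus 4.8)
The plan is to recognize the inner objective as a normalized-maximum-likelihood problem and to solve the minimax by the equalizer principle. First I would freeze $\mathcal{D}_N$ and $x$ and introduce the shorthand $g(y) := p_{\hat\theta(\mathcal{D}_N;x,y)}(y|x)$ for the genie's probability of the hypothesized label; this is a fixed nonnegative function of $y$. Substituting the genie term into the regret \eqref{eq:regret} collapses it to $R(q;\mathcal{D}_N;x,y) = \log\frac{g(y)}{q(y|x)}$, so that the minimax \eqref{eq:minmax_prob} becomes $\Gamma = \min_q \max_{y\in\mathcal{Y}} \log\frac{g(y)}{q(y|x)}$, where $q$ ranges over probability assignments on $\mathcal{Y}$.

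Next I would exhibit the candidate $q^*(y|x) = g(y)/\sum_{y'\in\mathcal{Y}} g(y')$, which matches \eqref{eq:pNML}, and verify it is a legitimate distribution (nonnegative and summing to one). Plugging it in gives $\log\frac{g(y)}{q^*(y|x)} = \log\sum_{y'\in\mathcal{Y}} g(y')$ independently of $y$, so $q^*$ is an equalizer rule whose worst-case regret is the constant $\log\sum_{y'\in\mathcal{Y}} g(y')$. This establishes the achievability direction $\Gamma \le \log\sum_{y'\in\mathcal{Y}} g(y')$.

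For the matching converse I would argue by contradiction: fix any valid $q$ and suppose $\max_{y} \log\frac{g(y)}{q(y|x)} < \log\sum_{y'\in\mathcal{Y}} g(y')$. Then $q(y|x) > g(y)/\sum_{y'\in\mathcal{Y}} g(y')$ for every $y$; summing over $y\in\mathcal{Y}$ forces $1 = \sum_{y} q(y|x) > 1$, a contradiction. Hence every learner incurs worst-case regret at least $\log\sum_{y'\in\mathcal{Y}} g(y')$, so $q^*$ is the minimizer and $\Gamma$ equals the stated value, proving both the minimax characterization and the explicit formulas.

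I expect the main obstacle to be the converse/optimality step rather than the algebra: one must rule out that some cleverer learner beats the equalizer, and the summation-to-contradiction argument is precisely the device that does so. A secondary technical point worth flagging is well-posedness: the normalizer $\sum_{y'\in\mathcal{Y}} g(y')$ must be finite (immediate when $\mathcal{Y}$ is finite, as in the classification setting considered here), and the genie minimizer $\hat\theta(\mathcal{D}_N;x,y)$ of \eqref{eq:genie_pnml} must exist so that $g(y)$ is well defined for each hypothesized label.
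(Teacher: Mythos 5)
Your proposal is correct and follows essentially the same route as the paper: exhibit the pNML assignment as an equalizer whose regret $\log\sum_{y'} p_{\thetagenietag}(y'|x)$ is constant in $y$, then use the fact that probabilities sum to one to show no other assignment can do better (the paper phrases this as ``any other assignment gives smaller probability to some outcome,'' you phrase it as a summation contradiction --- these are the same argument in contrapositive form). Your write-up is simply a rigorous, self-contained version of the paper's sketch, including the worthwhile well-posedness remarks about finiteness of $\mathcal{Y}$ and existence of the genie minimizer.
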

\begin{proof}
The regret is equal for all choices of $y$. If we consider a different probability assignment, it should assign a smaller probability for at least one of the outcomes. If the true label is one of those outcomes it will lead to a higher regret. For more information see \citet{fogel2018universal}.
\end{proof}
The pNML regret is associated with the model complexity~\citep{zhang2012model}. This complexity measure formalizes the intuition that a model that fits almost every data pattern very well would be much more complex than a model that provides a relatively good fit to a small set of data.
Thus, the pNML incorporates a trade-off between goodness of fit and model complexity as measured by the regret.

\minisection{Online update of a neural network}
Let $X_N$ and $Y_N$ be the data and label matrices of $N$ training points respectively
\begin{equation} \label{eq:training set_matrix}
X_N = 
\begin{bmatrix}
x_1 & x_2 & \hdots & x_N
\end{bmatrix}^\top
\in \mathcal{R}^{N \times M}, \quad
Y_N = 
\begin{bmatrix}
y_1 & y_2 & \dots & y_N
\end{bmatrix}^\top
\in \mathcal{R}^{N \times C},
\end{equation} 
such that the number of input features and model outputs are $M$ and $C$ respectively.
Denote $X_N^+$ as the Moore–Penrose inverse of the data matrix
\begin{equation} \label{eq:pseudo-inverse}
X_N^+ = 
\begin{cases} 
(X_N^\top X_N)^{-1} X_N^\top & \textit{Rank}(X_N^\top X_N) = M\\ 
X_N^\top (X_N X_N^\top )^{-1} & \textit{otherwise},
\end{cases}
\end{equation}
$f(\cdot)$ and $f^{-1}(\cdot)$ as the activation and inverse activation functions, and $\theta \in \mathcal{R}^{M \times C}$ as the learnable parameters.
The ERM, that minimizes the training set mean squared error (MSE), is
\begin{equation} \label{eq:nn_solution}
\hat{\theta}_N = X_N^{+} f^{-1}(Y_N) =  \argmin_{\theta} ||Y_N - f(X_N \theta)||_2^2.
\end{equation}
Recently, \citet{zhuang2020training} suggested a recursive formulation of the DNN weights. Using their scheme only one training sample is processed at a time:
Denote the projection of a sample $x$ onto the orthogonal subspace of the training set correlation matrix as
\begin{equation}
x_\bot = \left(I - X_N^+ X_N \right) x,
\end{equation}
the update rule when receiving a new training sample with data 
$x$ with label $y$ is
\begin{equation} \label{eq:update_nn}
\thetagenie = \hat{\theta}_N + g \left(f^{-1}(y) - x^\top \hat{\theta}_N \right), \quad
g \triangleq
\begin{cases}
\frac{1}{\norm{x_\bot}^2} x_\bot &  x_\bot  \neq 0 \\
\frac{1}{1 + x^\top X_N^+ X_N^{+ \top} x} X_N^+ X_N^{+ \top} x, &  x_\bot = 0
\end{cases}.
\end{equation}
In their paper, \citet{zhuang2020training} applied this formulation for a DNN, layer by layer.

\section{The pNML for a single layer NN}
\label{sec:pnml_regret}

Intuitively, the pNML as stated in~\eqref{eq:pNML} can be described as follows: To assign a probability for a potential outcome, (i) add it to the training set with an arbitrary label, (ii) find the best-suited model, and (iii) take the probability it gives to the assumed label. 
Follow this procedure for every label and normalize to get a valid probability assignment. Use the log normalization factor as the confidence measure.
This method can be extended to any general learning procedure that generates a prediction based on a training set. 
One such method is a single layer NN.

A single layer NN maps an input $x\in \mathcal{R}^{M \times 1}$ using the softmax function to a probability vector which represents the probability assignment to one of $C$ classes
\begin{equation}
p_\theta(i|x) 
= f(x^\top \theta)_i
= \frac{e^{\theta_i^\top x}}{\sum_{j=1}^C e^{\theta_j^\top x}},
\quad i \in \{1,\dots, C\}.
\end{equation}
To align with the recursive formulation  of~\eqref{eq:update_nn}, the label $y$ is a one-hot vector with $C$ elements and is represented by a row vector in the standard base $\{e_i\}_{i=1}^C$ (the i-th element is 1 and all others are 0).
$c$ stands for the true test label such that $e_c$ is the one-hot vector of the true label.
Also, the learnable parameters $\left\{\theta_i\right\}_{i=1}^C$ are the columns of the parameter matrix of~\eqref{eq:nn_solution}.

We would like that the probability assignment of the true label to be 1. We define the following MSE minimization objective
\begin{equation}
\min_{\theta \in \Theta} \sum_{n=1}^N \left(1 - y_n f\left(\theta^\top x_n \right)  \right)^2.
\end{equation}
Minimizing this MSE objective is equivalent to constrain the genie to the following Gaussian family
\begin{equation} \label{sec:limitations_brier_score}
P_\Theta = \left\{p_\theta(y|x) = \frac{1}{\sqrt{2 \pi \sigma^2}} \exp \left\{-\frac{1}{2\sigma^2}\left(e_c \left(y - f(x^\top \theta) \right) \right)^2 \right\}, \ \theta \in R^{M \times C} \right\}
\end{equation}
and minimizing the log loss with respect to it.
More details are in \appref{appendix:logloss}.
The inverse of the activation function is
\begin{equation}
z \triangleq f^{-1}\left(p_\theta \left(i|x\right) \right) = \ln p_\theta \left(i|x\right) + \ln \sum_{j=1}^C e^{\theta_j^\top x}.
\end{equation}

To compute the genie, we assume that the label of the test sample is known and add it to the training set. We fit the model by optimizing the learnable parameters to minimize the loss of this new set.
\begin{lemma} \label{lemma:genie}
Given test data $x$ with label $y=e_c$, the genie prediction of the true label is 
\begin{equation}
p_{\thetageniein{e_c}} (c|x) 
=
\frac{p_c}
{p_c + p_c^{x^\top g}\left(1 - p_c\right)},
\end{equation}
where $p_c$ is the probability assignment of the ERM model of the label $c$, and $g$ is as defined in \eqref{eq:update_nn}.
\end{lemma}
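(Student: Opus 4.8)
The plan is to read off the genie predictor directly from the online update rule \eqref{eq:update_nn}, specialized to the softmax single-layer model, and then push the updated parameter through the softmax to obtain $p_{\thetageniein{e_c}}(c|x)$. Writing $s_j \triangleq \theta_j^\top x$ for the ERM logits at the test point, so that $p_j = e^{s_j}/Z$ with $Z \triangleq \sum_{j=1}^C e^{s_j}$ and in particular $s_c = \ln p_c + \ln Z$, the entire argument reduces to tracking how the single scalar $s_c$ moves under the update and then re-normalizing.

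The first and crucial step is to pin down the regression target. Because the genie is constrained to the Gaussian family \eqref{sec:limitations_brier_score}, whose exponent penalizes only the $c$-th (true-class) coordinate of $y - f(x^\top\theta)$, adding the pair $(x,e_c)$ through the logit-space least squares underlying \eqref{eq:update_nn} gives a finite target only to the column $\theta_c$, leaving the columns $\theta_j$ with $j\neq c$ at their ERM values. Using $z = f^{-1}(\cdot) = \ln(\cdot) + \ln Z$ with the ERM normalizer held fixed, the target for the $c$-th logit is $f^{-1}(1) = \ln Z$, so the residual driving the update is $\ln Z - s_c = -\ln p_c$. Handling this target is the conceptual obstacle: the naive reading of \eqref{eq:update_nn} with the one-hot $e_c$ forces $-\infty$ logits on the off-classes, and it is precisely the Gaussian family that legitimizes restricting attention to the $c$-th coordinate and to the finite residual $-\ln p_c$.

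With the residual in hand the rest is mechanical. The update \eqref{eq:update_nn} gives the new true-class logit $s_c' = s_c + (x^\top g)(\ln Z - s_c) = s_c - (x^\top g)\ln p_c$, with the off-class logits unchanged. I would then form $p_{\thetageniein{e_c}}(c|x) = e^{s_c'}/\big[(Z - e^{s_c}) + e^{s_c'}\big]$, substitute $e^{s_c'} = e^{s_c}\,p_c^{-x^\top g}$ together with $e^{s_c} = p_c Z$, and divide numerator and denominator by $Z$; this yields $p_c^{\,1 - x^\top g}/\big[(1-p_c) + p_c^{\,1 - x^\top g}\big]$, and multiplying top and bottom by $p_c^{\,x^\top g}$ produces exactly the claimed $p_c/\big(p_c + p_c^{\,x^\top g}(1-p_c)\big)$.

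As a sanity check I would examine the endpoints of the interpolation factor $x^\top g$. When $x_\bot \neq 0$ the projection identity $x^\top x_\bot = \norm{x_\bot}^2$ gives $x^\top g = 1$, whereas when $x_\bot = 0$ one reads $x^\top g \in [0,1)$ from \eqref{eq:update_nn}; so the exponent always lies in $[0,1]$ and the genie probability interpolates between the ERM value $p_c$ at $x^\top g = 0$ and $1/(2-p_c)$ at $x^\top g = 1$, both of which the derived formula reproduces.
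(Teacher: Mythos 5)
Your proof is correct and takes essentially the same route as the paper's: both apply the recursive update \eqref{eq:update_nn} only to the $c$-th column with the finite logit target $f^{-1}(1)=\ln Z$ (the paper's $z=\ln S$), leave the off-class columns at their ERM values, push the updated logit through the softmax, and simplify by dividing by the normalizer and multiplying through by $p_c^{x^\top g}$. Your explicit justification (via the Gaussian family) for why only the true-class column is updated, and the endpoint check $x^\top g\in[0,1]$, make explicit what the paper leaves implicit, but they do not change the argument.
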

\begin{proof}
Using~\eqref{eq:update_nn}, the probability assignment of the genie can be written as follows.
\begin{equation} \label{eq:lemma_genie}
p_{\thetageniein{e_c}}(c|x) 
 =
\frac{e^{\thetageniein{e_c}^\top x}}{\sum_{\substack{j=1 \\ j \neq c}}^C e^{\theta_j^\top x} + e^{\thetageniein{e_c}^\top x} }
 =
\frac
{e^{x^\top \left[ \theta_{c} + g \left(z - \theta_c^\top x \right)\right]}}
{\sum_{j=1}^C e^{\theta_j^\top x} - e^{\theta_c^\top x} + e^{x^\top \left[ \theta_c + g \left(z - \theta_c^\top x \right)\right]}}.
\end{equation}
The genie knows the true test label $e_c$. The inverse activation function can be written as 
$z = \ln S$ where $S \triangleq \sum_{j=1}^C e^{\theta_j^\top x}$.
The simplified numerator is
\begin{equation}
e^{\theta_{c}^\top x}e^{x^\top g \left(z - \theta_c^\top x \right)}
=
e^{\theta_{c}^\top x}\left[S e^{-\theta_c^\top x}\right]^{x^\top g }
=
S p_c^{-x^\top g} p_c.
\end{equation}
Substituting to \eqref{eq:lemma_genie}
and divide the numerator and denominator by $S$ provides the result.
\end{proof}

The true test label is not available to a legit learner. Therefore, in the pNML process, every possible label is taken into account. The pNML regret is the logarithm of the sum of models' prediction, each one trained with a different test label value.

\begin{theorem}
Denote $p_i$ as the ERM prediction of label $i$, the pNML regret of a single layer NN is
\begin{equation}
\Gamma =\log \sum_{i=1}^{C} \frac{p_i} {p_i + p_i^{x^\top g}\left(1 - p_i\right)}.
\end{equation}
\end{theorem}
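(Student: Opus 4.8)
The plan is to obtain $\Gamma$ by specializing the generic pNML regret formula from \Theoref{theroem:pnml} to the single-layer hypothesis class, and then substituting the closed-form genie prediction established in \Lemmaref{lemma:genie}. The general theorem already tells us that
\begin{equation}
\Gamma = \log \sum_{y' \in \mathcal{Y}} \probthetagenietag,
\end{equation}
that is, the regret is the logarithm of the sum, over every admissible test label, of the probability that the corresponding genie assigns to that same label. So the only remaining work is to identify each summand in the single-layer setting.

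First I would observe that here the label alphabet $\mathcal{Y}$ is exactly the set of one-hot vectors $\{e_i\}_{i=1}^C$, so the sum over $y'$ collapses to a sum over the $C$ class indices with $y' = e_i$. For a fixed candidate index $i$, the genie is the learner that appends the pair $(x, e_i)$ to the training set, refits the single-layer weights via the online update of \eqref{eq:update_nn}, and then reports its own probability for class $i$. But this is precisely the quantity computed in \Lemmaref{lemma:genie}: replacing the symbol $c$ there (the assumed true class) by the running index $i$, the lemma gives
\begin{equation}
p_{\thetageniein{e_i}}(i|x) = \frac{p_i}{p_i + p_i^{x^\top g}\left(1 - p_i\right)},
\end{equation}
where $p_i$ is the ERM softmax probability of class $i$ and $g$ is the update direction of \eqref{eq:update_nn}. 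Substituting this expression for each term $\probthetagenietag$ in the regret sum then yields the claimed formula directly, with no further optimization or algebra required.

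The step that deserves the most care is conceptual rather than computational: the bookkeeping of \Lemmaref{lemma:genie}. That lemma is phrased for the genie of the \emph{true} label $c$, whereas the pNML construction forces us to evaluate it once for every hypothesized label in turn. I would make explicit that the lemma's derivation never invoked the correctness of $e_c$ --- it used only that the appended one-hot target sets $z = \ln S$ with $S \triangleq \sum_{j=1}^C e^{\theta_j^\top x}$ (a quantity depending on the ERM weights and $x$ alone, hence independent of which label is hypothesized) and that the genie's self-assigned probability is being read off. Likewise, $g$ in \eqref{eq:update_nn} depends only on $x$ and the training correlation structure, not on the label, so the same $x^\top g$ appears uniformly across all $C$ terms. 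Once this identification is granted --- the same closed form applying verbatim with $i$ in place of $c$, and $p_i$ being the ERM probability for that particular class --- the result is immediate.
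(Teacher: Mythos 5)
Your proposal is correct and takes essentially the same route as the paper's own proof: invoke \Theoref{theroem:pnml} to write the regret as the log of the sum, over all $C$ hypothesized one-hot labels, of each genie's self-assigned probability, and then substitute the closed form of \Lemmaref{lemma:genie} for every term. Your explicit verification that the lemma's derivation never uses the correctness of the hypothesized label (since $z=\ln S$ and $x^\top g$ depend only on the ERM weights, $x$, and the training data) is left implicit in the paper, but the argument is the same.
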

\begin{proof}
The normalization factor is the sum of the probabilities assignment of models that were trained with a specific value of the test sample
$K = \sum_{i=1}^{C} p_{\thetageniein{e_i}}(i|x)$.
Using \Theoref{theroem:pnml}, the log normalization factor is the pNML regret. 
With \lemmaref{lemma:genie}, we get the explicit expression.
\end{proof}

The pNML probability assignment of label $i \in \{1,\dots,C\}$ is the probability assignment of a model that was trained with that label divided by the normalization factor  $q_\textit{pNML}(i|x) = \frac{1}{K} \probthetagenie$.

Let $u_m$ and $h_m$ be the $m$-th eigenvector and eigenvalue of the training set data matrix $X_N$ such that for $x_\bot =0$, the quantity $x^\top g$ is
\begin{equation}
x^\top g = \frac{x^\top X_N^+ X_N^{+ \top} x}{1 + x^\top X_N^+ X_N^{+ \top} x} 
=
\frac{\frac{1}{N} \sum_{m=1}^{M} \frac{1}{h_m^2} \left(x^\top u_m\right)^2 }{1 + \frac{1}{N} \sum_{i=1}^{M} \frac{1}{h_m^2} \left(x^\top u_m\right)^2}.
\end{equation}
We make the following remarks.
\begin{enumerate}
\item 
If the test sample $x$ lies in the subspace spanned by the eigenvectors with large eigenvalues, $x^\top g$ is small and the corresponding regret is low
$\lim_{x^\top g \xrightarrow{} 0}\Gamma = \log \sum_{i=1}^C p_i =  0$.
In this case, the pNML prediction is similar to the genie and can be trusted.
\item
Test input that resides is in the subspace that corresponds to the small eigenvalues produces $x^\top g=1$ and a large regret is obtained 
$\lim_{x^\top g \xrightarrow{} 1} \Gamma =\log \sum_{i=1}^C \frac{1}{2 - p_i^2}$.
The prediction for this test sample cannot be trusted. In~\secref{sec:experiments} we show that in this situation the test sample can be classified as an OOD sample.
\item
As the training set size ($N$) increases $x^\top g$ becomes smaller and the regret decreases.
\item If the test sample is far from the decision boundary, the ERM assigns to one of the labels probability 1. In this case, the regret is 0 no matter in which subspace the test vector lies.
\end{enumerate}

\begin{figure}[tbh]
    \centering
\begin{subfigure}[t]{0.49\linewidth}
    \includegraphics[width=1.0\textwidth]{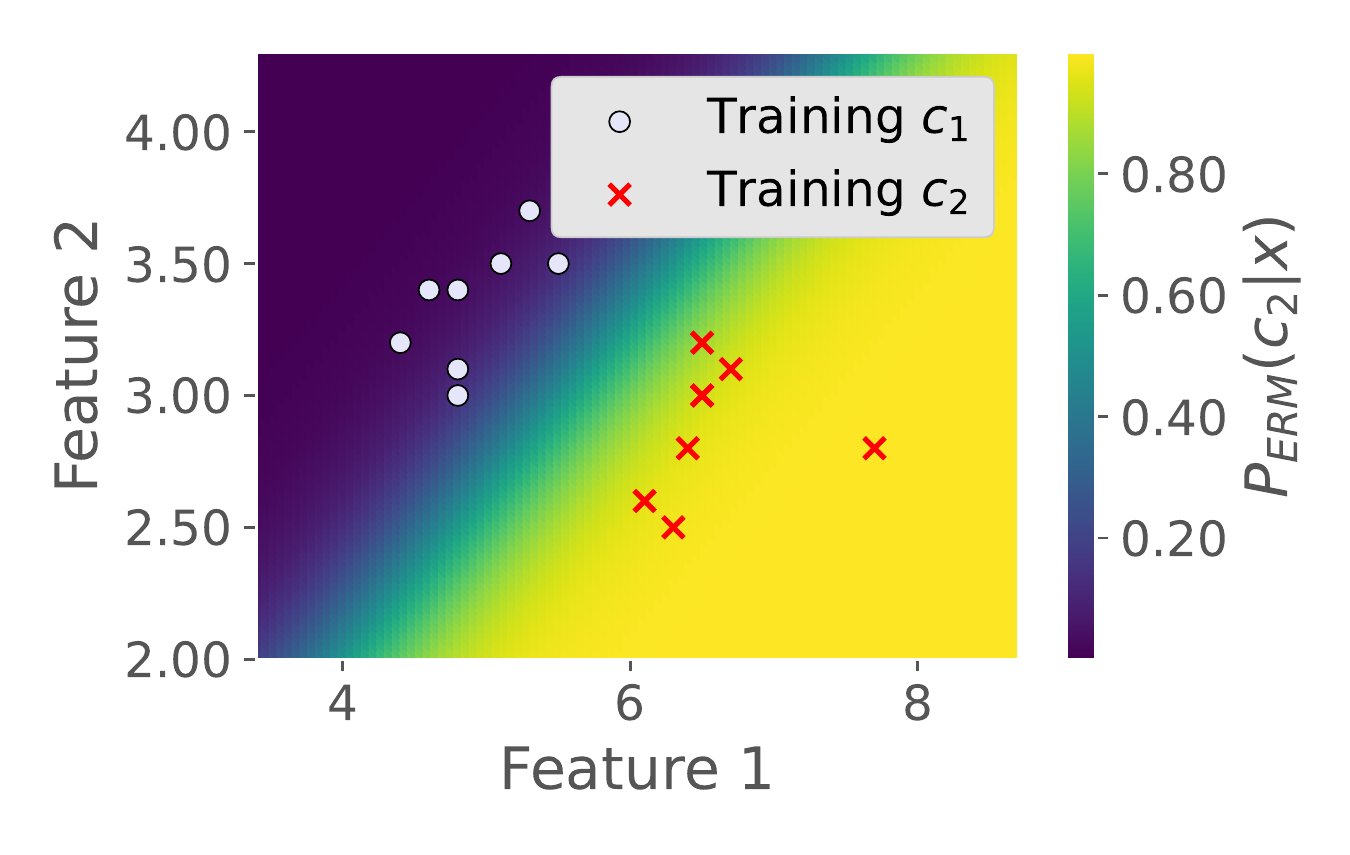}
    \vspace{-8mm}
    \caption{ERM probability for the separable split\label{fig:syntetic_erm_prob}}
\end{subfigure}
\begin{subfigure}[t]{0.49\linewidth}
    \includegraphics[width=1.0\textwidth]{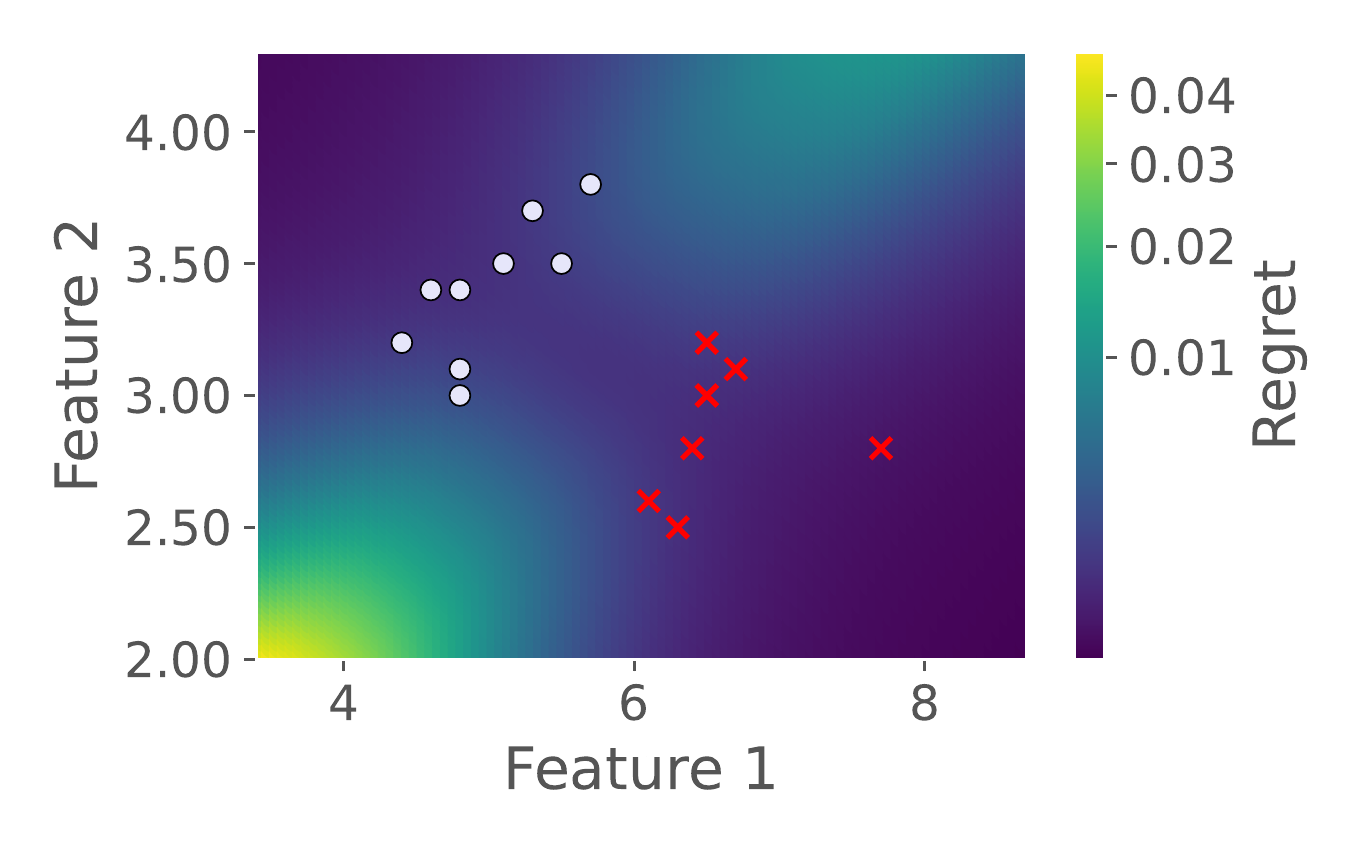}
    \vspace{-8mm}
    \caption{pNML regret for the separable split\ \label{fig:syntetic_regret}}
\end{subfigure}
\begin{subfigure}[t]{0.49\linewidth}
    \includegraphics[width=1.0\textwidth]{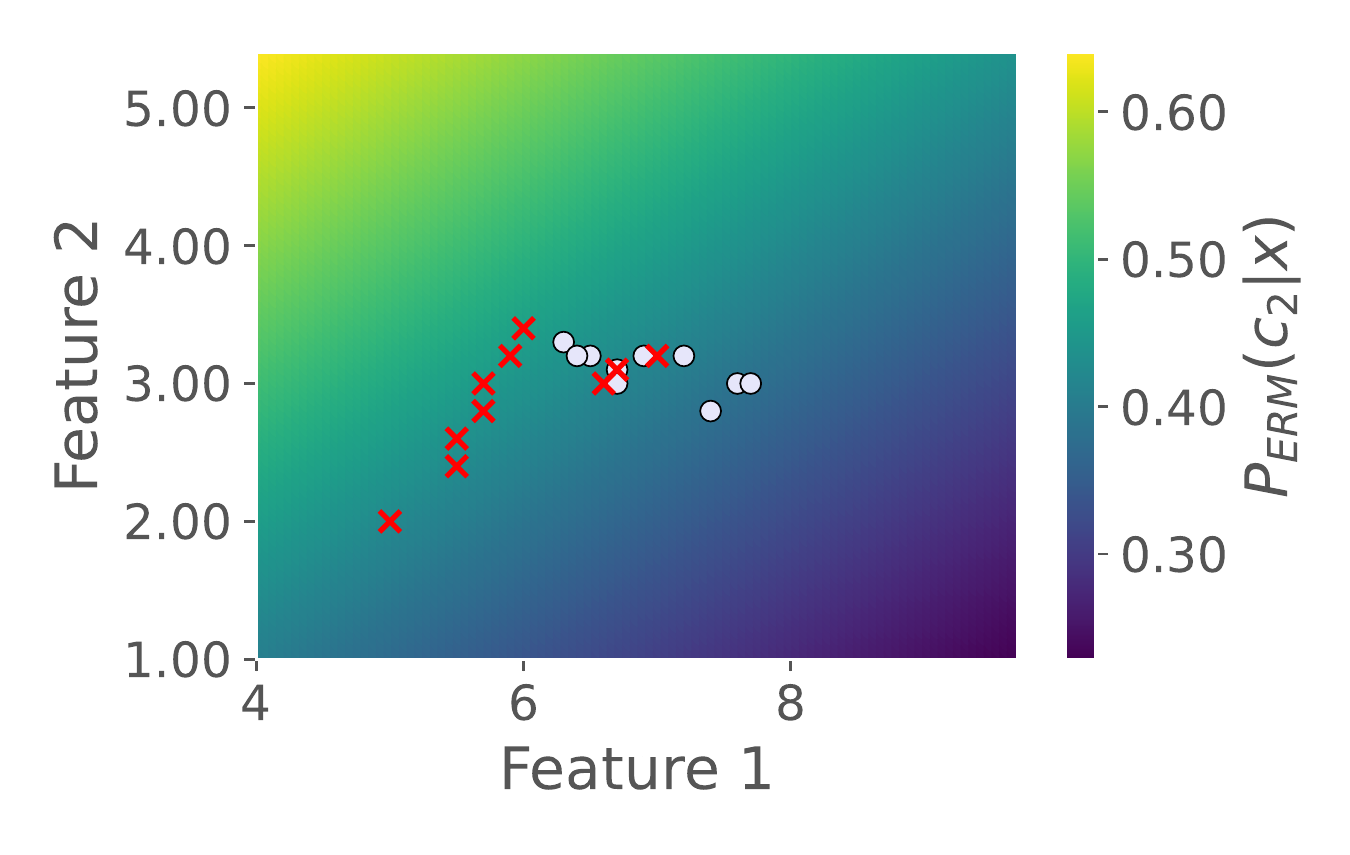}
    \vspace{-8mm}
    \caption{ERM probability for the inseparable split\  \label{fig:syntetic_erm_prob_inseparable}}
\end{subfigure}
\begin{subfigure}[t]{0.49\linewidth}
    \includegraphics[width=1.0\textwidth]{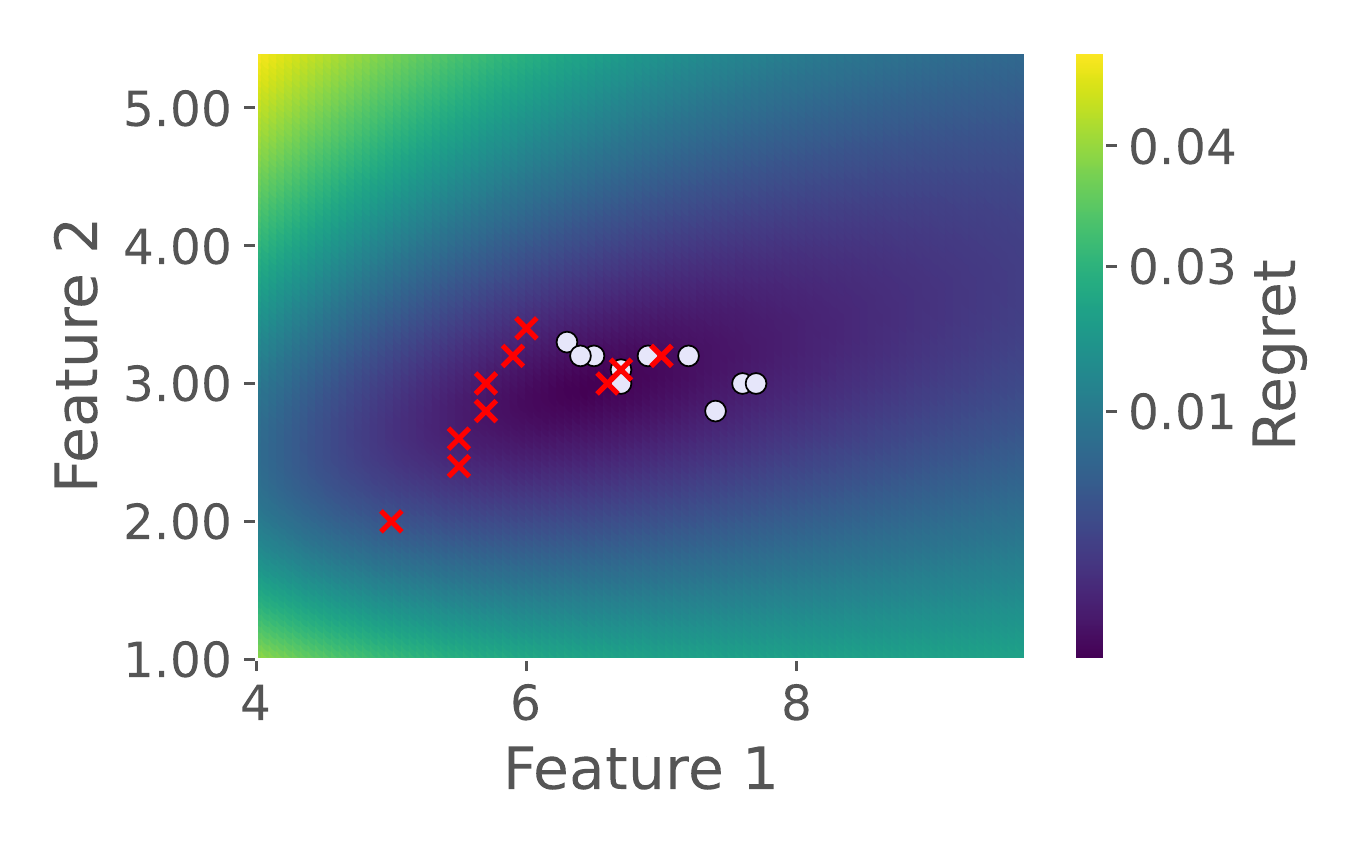}
    \vspace{-8mm}
    \caption{pNML regret for the inseparable split\ \label{fig:syntetic_regret_inseparable}}
\end{subfigure}
\caption{A single layer NN fitted to low dimensional data (the Iris flower set~\citep{fisher1936use}).
(a) and (c) show the ERM probability assignment of class $c_2$ for a separable split and inseparable split respectively.
The derived pNML regret for the separable split is shown in (b) and for the inseparable split is shown in (d). The training data of class $c_1$ and $c_2$ are marked with red circles and red crosses respectively. 
Low regret is associated with the training data surroundings.
See \secref{sec:low_dim_set}.}
\label{fig:syntetic_sumylation}
\end{figure}

\subsection{The pNML regret characteristics using a low-dimensional dataset} \label{sec:low_dim_set}
We demonstrate the characteristics of the derived regret and show in what situations the prediction of the test sample can be trusted.
To visualize the pNML regret on a low-dimensional dataset, we use the Iris flower data set~\citep{fisher1936use}. We utilize two classes and two features and name them $c_1$, $c_2$, and feature 1, feature 2 respectively.

\Figref{fig:syntetic_erm_prob} shows the ERM probability assignment of class $c_2$ for a single layer NN that was fitted to the training data, which are marked in red.
At the top left and bottom right, the model predicts with high probability that a sample from these areas belongs to class $c_1$ and $c_2$ respectively.

\Figref{fig:syntetic_regret} presents the analytical pNML regret. 
At the upper left and lower right, the regret is low: Although there are no training samples there, these regions are far from the decision boundary, adding one sample would not alter the probability assignment significantly, thus the pNML prediction is close to the genie.
At the top right and bottom left, there are no training points therefore the regret is relatively high and the confidence in the prediction is low.
In \secref{sec:experiments}, we show that these test samples, which are associated with high regret, can be classified as OOD samples.

In addition, we visualize the regret for overlapping classes.
In \figref{fig:syntetic_erm_prob_inseparable}, the ERM probability assignment for inseparable class split is shown.
The ERM probability is lower than 0.7 for all test feature values. \Figref{fig:syntetic_regret_inseparable} presents the corresponding pNML regret. The pNML regret is small in the training data surroundings (including the mixed label area). The regret is large in areas where the training data is absent, as in the figure edges.

\subsection{Deep neural network adaptation}
\label{sec:dnn_adaptation}
In previous sections, we derived the pNML for a single layer NN. 
We next show that our derivations can, in fact, be applied to {\em any pretrained NN}, without requiring additional parameters or extra data. 

First extract the embeddings of the training set:
Denote $\phi(\cdot)$ as the embedding creation (feature extraction) using a pretrained ERM model, propagate the training samples through the DNN up to the last layer and compute the inverse of the data matrix $\phi(X_N)^+ \phi(X_N)^{+\top}$.
Then, given a specific test example $x$, extract its embedding $\phi(x)$ and its ERM probability assignment $\{p_i\}_{i=1}^C$.
Finally calculate the regret as described in \Theoref{theroem:pnml} using the training set and test embedding vectors.


We empirically found that norms of OOD embeddings are lower than those of IND samples.
The regret depends on the norm of the test sample: For $0<a<b$, the regret of $a x$ is lower than the regret of $b x$.
Hence, we normalize all embeddings (training, IND, and OOD) to have $\normltwo$ norms equal to 1.0.

Samples with a high regret value are considered samples with a large distance from the genie, the learner that knows the true label, and therefore the prediction cannot be trusted. Our proposed method utilizes this regret value to determine whether a test data item represents a known or an unknown.


\section{Application to out-of-distribution detection} 
\label{sec:experiments}
We rigorously test the effectiveness of the pNML regret for OOD detection\footnote{Code is available in \url{https://github.com/kobybibas/pnml_ood_detection}}.
The motivation for using the individual setting and the pNML as its solution for OOD detection is that in the individual setting there is no assumption on the way the data is generated. The absence of assumption means that the result holds for a wide range of scenarios (PAC, stochastic, and even adversary) and specifically to OOD detection, where the OOD samples are drawn from an unknown distribution.

\subsection{Experimental setup} 
\label{sec:setup}
We follow the standard experimental setup~\citep{liu2020energy,gram,lee2018simple}.
All the assets we used are open-sourced with either Apache-2.0 License or Attribution-NonCommercial 4.0 International licenses.
We ran all experiments on NVIDIA K80 GPU.

\minisection{IND sets}
For datasets that represent known classes, we use CIFAR-100, CIFAR-10~\citep{krizhevsky2014cifar} and  SVHN~\citep{netzer2011reading}. These sets contain RGB images with 32x32 pixels.
In addition, to evaluate higher resolution images, we use ImageNet-30 set~\citep{DBLP:conf/nips/HendrycksMKS19}.

\begin{table}[tb]
\centering
\small
\caption{OOD detection for DenseNet-BC-100 model, comparing our pNML-based approach to leading methods. Results reported using AUROC show our method enhances previous work up to 69.4\%, 49.4\%, 3.1\%, and 2.2\%. See \secref{sec:results} for more details.}
\vspace{0.1mm}
\label{tab:auroc_densenet}
\begin{tabular}{clcccc}
\toprule
IND & OOD &        Baseline/+pNML &            ODIN/+pNML &            Gram/+pNML &            OECC/+pNML \\
\midrule
\multirow{8}{*}{CIFAR-100} & iSUN &  69.7 / \textbf{96.4} &  84.5 / \textbf{96.7} &  99.0 / \textbf{99.5} &  99.2 / \textbf{99.5} \\
     & LSUN (R) &  70.8 / \textbf{96.6} &  86.0 / \textbf{96.9} &  99.3 / \textbf{99.7} &  99.4 / \textbf{99.6} \\
     & LSUN (C) &  80.1 / \textbf{93.1} &  91.5 / \textbf{93.1} &  91.4 / \textbf{94.5} &  93.9 / \textbf{96.1} \\
     & Imagenet (R) &  71.6 / \textbf{97.4} &  85.5 / \textbf{97.6} &  99.0 / \textbf{99.5} &  99.0 / \textbf{99.5} \\
     & Imagenet (C) &  76.2 / \textbf{95.7} &  88.8 / \textbf{96.0} &  97.7 / \textbf{98.7} &  98.2 / \textbf{99.0} \\
     & Uniform &   43.3 / \textbf{100} &   83.7 / \textbf{100} &    100 / \textbf{100} &   99.9 / \textbf{100} \\
     & Gaussian &   30.6 / \textbf{100} &   50.6 / \textbf{100} &    100 / \textbf{100} &    100 / \textbf{100} \\
     & SVHN &  82.6 / \textbf{96.2} &  92.5 / \textbf{96.2} &  97.3 / \textbf{98.4} &  97.0 / \textbf{97.5} \\
\midrule
\multirow{8}{*}{CIFAR-10} & iSUN &  94.8 / \textbf{98.7} &  98.9 / \textbf{98.9} &   99.8 / \textbf{100} &   99.9 / \textbf{100} \\
     & LSUN (R) &  95.5 / \textbf{98.9} &  99.2 / \textbf{99.2} &   99.9 / \textbf{100} &   99.9 / \textbf{100} \\
     & LSUN (C) &  93.0 / \textbf{96.4} &  95.8 / \textbf{96.4} &  97.5 / \textbf{98.7} &  98.9 / \textbf{99.9} \\
     & Imagenet (R) &  94.1 / \textbf{98.8} &  98.5 / \textbf{99.0} &  99.7 / \textbf{99.9} &  99.8 / \textbf{99.9} \\
     & Imagenet (C) &  93.8 / \textbf{97.7} &  97.6 / \textbf{97.9} &  99.3 / \textbf{99.7} &  99.5 / \textbf{99.9} \\
     & Uniform &   96.6 / \textbf{100} &    100 / \textbf{100} &    100 / \textbf{100} &    100 / \textbf{100} \\
     & Gaussian &   97.6 / \textbf{100} &    100 / \textbf{100} &    100 / \textbf{100} &    100 / \textbf{100} \\
     & SVHN &  89.9 / \textbf{98.4} &  94.6 / \textbf{98.7} &  99.1 / \textbf{99.6} &   99.6 / \textbf{100} \\
\midrule
\multirow{9}{*}{SVHN} & iSUN &  94.4 / \textbf{98.7} &  92.8 / \textbf{99.1} &  99.8 / \textbf{99.9} &    100 / \textbf{100} \\
     & LSUN (R) &  94.1 / \textbf{98.4} &  92.5 / \textbf{98.9} &   99.8 / \textbf{100} &    100 / \textbf{100} \\
     & LSUN (C) &  92.9 / \textbf{98.0} &  88.6 / \textbf{98.1} &  98.6 / \textbf{99.4} &   99.8 / \textbf{100} \\
     & Imagenet (R) &  94.8 / \textbf{98.6} &  93.3 / \textbf{99.0} &  99.7 / \textbf{99.9} &    100 / \textbf{100} \\
     & Imagenet (C) &  94.6 / \textbf{98.6} &  92.8 / \textbf{98.8} &  99.4 / \textbf{99.8} &    100 / \textbf{100} \\
     & Uniform &  93.2 / \textbf{99.8} &   91.6 / \textbf{100} &   99.9 / \textbf{100} &    100 / \textbf{100} \\
     & Gaussian &  97.4 / \textbf{99.8} &  98.9 / \textbf{99.9} &    100 / \textbf{100} &    100 / \textbf{100} \\
     & CIFAR-10 &  91.8 / \textbf{96.7} &  88.9 / \textbf{97.8} &  95.4 / \textbf{97.3} &   99.5 / \textbf{100} \\
     & CIFAR-100 &  91.4 / \textbf{96.7} &  88.2 / \textbf{97.8} &  96.4 / \textbf{98.0} &   99.6 / \textbf{100} \\
\bottomrule
\end{tabular}

\end{table}

\minisection{OOD sets}
The OOD sets are represented by TinyImageNet~\citep{liang2017enhancing}, LSUN~\citep{yu15lsun}, iSUN~\citep{xu2015turkergaze}, Uniform noise images, and Gaussian noise images. 
We use two variants of TinyImageNet and LSUN sets: a 32x32 image crop that is represented by ``(C)'' and a resizing of the images to 32x32 pixels that termed by ``(R)''.
We also used CIFAR-100, CIFAR-10, and SVHN as OOD for models that were not trained with them.

\minisection{Evaluation methodology}
We benchmark our approach by adopting the following metrics~\citep{gram,lee2018simple}: 
(i) AUROC: The area under the receiver operating characteristic curve of a threshold-based detector. A perfect detector corresponds to an AUROC score of 100\%.
(ii) TNR at 95\% TPR: The probability that an OOD sample is correctly identified (classified as negative) when the true positive rate equals 95\%.
(iii) Detection accuracy: Measures the maximum possible classification accuracy over all possible thresholds.

\begin{table}[tb]
\centering
\small
\caption{A comparison of our pNML regret based detection to leading methods for ResNet-34 model. 
Results measured by the AUROC metric show that our technique offers significant improvements over previous work of up to 41.5\%, 11.9\%, 2.4\%, and 2.1\%. See \secref{sec:results} for more details.}
\label{tab:auroc_resnet}
\vspace{0.1mm}
\begin{tabular}{clcccc}
\toprule
IND & OOD &        Baseline/+pNML &            ODIN/+pNML &            Gram/+pNML &            OECC/+pNML \\

\midrule
\multirow{8}{*}{CIFAR-100} & iSUN &  75.7 / \textbf{83.0} &  85.6 / \textbf{87.6} &  98.8 / \textbf{99.1} &  99.0 / \textbf{99.3} \\
     & LSUN (R) &  75.6 / \textbf{83.8} &  85.4 / \textbf{88.0} &  99.2 / \textbf{99.4} &  99.3 / \textbf{99.6} \\
     & LSUN (C) &  75.5 / \textbf{83.1} &  82.6 / \textbf{88.1} &  92.2 / \textbf{94.6} &  95.7 / \textbf{97.8} \\
     & Imagenet (R) &  77.1 / \textbf{84.4} &  87.7 / \textbf{88.5} &  98.9 / \textbf{99.2} &  98.7 / \textbf{98.9} \\
     & Imagenet (C) &  79.6 / \textbf{85.8} &  85.6 / \textbf{88.6} &  97.7 / \textbf{98.4} &  97.9 / \textbf{98.1} \\
     & Uniform &  85.2 / \textbf{98.1} &  99.0 / \textbf{99.4} &    100 / \textbf{100} &    100 / \textbf{100} \\
     & Gaussian &  45.0 / \textbf{86.5} &  83.8 / \textbf{95.7} &    100 / \textbf{100} &    100 / \textbf{100} \\
     & SVHN &  79.3 / \textbf{90.9} &  94.0 / \textbf{95.4} &  96.0 / \textbf{97.9} &  97.0 / \textbf{97.6} \\
\midrule
\multirow{8}{*}{CIFAR-10} & iSUN &  91.0 / \textbf{96.4} &  94.0 / \textbf{97.5} &   99.8 / \textbf{100} &  99.9 / \textbf{99.9} \\
     & LSUN (R) &  91.1 / \textbf{96.6} &  94.1 / \textbf{97.7} &   99.9 / \textbf{100} &   \textbf{100} / 99.9 \\
     & LSUN (C) &  91.8 / \textbf{95.4} &  93.6 / \textbf{95.6} &  97.9 / \textbf{99.1} &  99.1 / \textbf{99.5} \\
     & Imagenet (R) &  91.0 / \textbf{95.4} &  93.9 / \textbf{96.6} &  99.7 / \textbf{99.9} &  99.9 / \textbf{99.9} \\
     & Imagenet (C) &  91.4 / \textbf{95.4} &  93.3 / \textbf{96.2} &  99.3 / \textbf{99.7} &  99.7 / \textbf{99.8} \\
     & Uniform &  96.1 / \textbf{99.8} &   99.9 / \textbf{100} &    100 / \textbf{100} &    100 / \textbf{100} \\
     & Gaussian &   97.5 / \textbf{100} &    100 / \textbf{100} &    100 / \textbf{100} &    100 / \textbf{100} \\
     & SVHN &  89.9 / \textbf{95.1} &  95.8 / \textbf{97.9} &  99.5 / \textbf{99.8} &  99.8 / \textbf{99.8} \\
\midrule
\multirow{9}{*}{SVHN} & iSUN &  92.2 / \textbf{97.1} &  91.4 / \textbf{98.0} &  99.8 / \textbf{99.9} &    100 / \textbf{100} \\
     & LSUN (R) &  91.5 / \textbf{96.7} &  90.6 / \textbf{97.7} &   99.8 / \textbf{100} &    100 / \textbf{100} \\
     & LSUN (C) &  92.8 / \textbf{97.0} &  92.3 / \textbf{97.1} &  98.8 / \textbf{99.6} &  99.7 / \textbf{99.9} \\
     & Imagenet (R) &  93.5 / \textbf{97.5} &  92.8 / \textbf{98.3} &  99.8 / \textbf{99.9} &    100 / \textbf{100} \\
     & Imagenet (C) &  94.2 / \textbf{97.5} &  93.7 / \textbf{98.2} &  99.5 / \textbf{99.9} &   99.9 / \textbf{100} \\
     & Uniform &  96.0 / \textbf{98.5} &  95.5 / \textbf{99.5} &    100 / \textbf{100} &    100 / \textbf{100} \\
     & Gaussian &  96.1 / \textbf{98.4} &  96.1 / \textbf{99.6} &    100 / \textbf{100} &    100 / \textbf{100} \\
     & CIFAR-10 &  93.0 / \textbf{97.4} &  92.0 / \textbf{98.0} &  97.4 / \textbf{99.3} &  99.4 / \textbf{99.8} \\
     & CIFAR-100 &  92.5 / \textbf{97.1} &  91.7 / \textbf{97.8} &  97.5 / \textbf{99.2} &  99.4 / \textbf{99.8} \\
\bottomrule
\end{tabular}

\end{table}

\subsection{Results} 
\label{sec:results}
We build upon the existing leading methods: Baseline~\citep{hendrycks17baseline}, ODIN~\citep{liang2017enhancing}, Gram~\citep{gram},  OECC~\citep{PAPADOPOULOS2021138}, and Energy~\citep{liu2020energy}.
We use the following pretrained models: ResNet-34~\citep{he2016deep}, DenseNet-BC-100~\citep{huang2017densely} and WideResNet-40~\citep{zagoruyko2016wide}. Training was performed using CIFAR-100, CIFAR-10 and SVHN, each training set used separately to provide a complete picture of our proposed method's capabilities. 
Notice that ODIN, OECC, and Energy methods use OOD sets during training and the Gram method requires IND validation samples.

\Tableref{tab:auroc_densenet} and \Tableref{tab:auroc_resnet} show the AUROC of different OOD sets for DenseNet and ResNet models respectively.
Our approach improves all the compared methods in nearly all combinations of IND-OOD sets. 
The largest AUROC gain over the current state-of-the-art is of CIFAR-100 as IND and LSUN (C) as OOD: For the DenseNet model, we improve Gram and OECC method by 3.1\% and 2.2\% respectively. For the ResNet model, we improve this combination by 2.4\% and 2.1\% respectively. 
The additional metrics (TNR at 95\% FPR and detection accuracy) are shown in \appref{appendix:ood_results}.

The Baseline method uses a pretrained ERM model with no extra data. 
Combining the pNML regret with the standard ERM model as shown in the Baseline+pNML column surpasses Baseline by up to 69.4\% and 41.5\% for DensNet and ResNet, respectively.
Also, Baseline+pNML is comparable to the more sophisticated methods:
Although it lacks tunable parameters and does not use extra data, Baseline+pNML outperforms ODIN in most
DenseNet IND-OOD set combinations.

\Tableref{tab:auroc_wrn_results} shows the results of the Energy method and our method when combined with the pretrained Energy model (Energy+pNML) with WideResNet-40 model on CIFAR-100 and CIFAR-10 as IND sets.
Evidently, our method improves the AUROC of the OOD detection task in 14 out of 16 IND-OOD combinations.
The most significant improvement is in CIFAR-100 as IND and ImageNet (R) and iSUN as the OOD sets. In these sets, we improve the AUROC by 15.6\% and 15.2\% respectively. 
For TNR at TPR 95\%, the pNML regret enhances the CIFAR-100 and Gaussian combination by 90.4\% and achieves a perfect separation of IND-OOD samples. 

For high resolution images, we use Resnet-18 and ResNet-101 models trained on ImageNet. We utilize the ImageNet-30 training set for computing $\phi(X_N)^+ \phi(X_N)^{+ \top}$. All images were resized to $254 \times 254$ pixels. We compare the result to the Baseline method in~\Tableref{tab:aruoc_imagnet30}.
The table shows that the pNML outperforms Baseline by up to 9.8\% and 8.23\% for ResNet-18 and ResNet-101 respectively.

\begin{table}[t]
\centering
\small
\caption{OOD detection for WideResNet-40 model. Results show our method improves the Energy~\citep{liu2020energy} method up to 15.6\%, 90.4\%, and 15.9\% for AUROC, TNR at TPR 95\%, and Detection accuracy respectively. See \secref{sec:results} for more details.}
\label{tab:auroc_wrn_results}
\vspace{0.1mm}
\begin{tabular}{clccc}
\toprule
          IND & OOD &                 AUROC &        TNR at TPR 95\% &        Detection Acc. \\
 \midrule & & \multicolumn{3}{c}{Energy/+pNML} \\ \cmidrule{3-5} 

\multirow{8}{*}{CIFAR-100} & iSUN &  78.4 / \textbf{93.6} &  30.7 / \textbf{62.5} &  71.1 / \textbf{87.0} \\
          & LSUN (R) &  80.3 / \textbf{94.1} &  31.2 / \textbf{65.5} &  73.1 / \textbf{87.5} \\
          & LSUN (C) &  \textbf{95.9} / 95.5 &  \textbf{80.0} / 79.3 &  \textbf{89.3} / 89.1 \\
          & Imagenet (R) &  71.4 / \textbf{87.0} &  22.1 / \textbf{44.8} &  66.1 / \textbf{79.9} \\
          & Imagenet (C) &  79.7 / \textbf{87.3} &  36.9 / \textbf{49.4} &  72.8 / \textbf{79.7} \\
          & Uniform &  97.9 / \textbf{99.8} &   95.2 / \textbf{100} &  95.8 / \textbf{99.6} \\
          & Gaussian &  92.0 / \textbf{99.8} &    9.6 / \textbf{100} &  92.3 / \textbf{99.8} \\
          & SVHN &  \textbf{96.5} / 96.4 &  79.2 / \textbf{82.8} &  90.5 / \textbf{91.3} \\
\midrule
\multirow{8}{*}{CIFAR-10} & iSUN &  99.3 / \textbf{99.4} &  98.3 / \textbf{98.7} &  96.7 / \textbf{97.0} \\
          & LSUN (R) &  99.3 / \textbf{99.5} &  98.6 / \textbf{99.0} &  97.0 / \textbf{97.3} \\
          & LSUN (C) &  99.4 / \textbf{99.5} &  98.6 / \textbf{98.6} &  97.0 / \textbf{97.1} \\
          & Imagenet (R) &  98.1 / \textbf{98.1} &  92.0 / \textbf{92.4} &  94.0 / \textbf{94.0} \\
          & Imagenet (C) &  98.6 / \textbf{98.6} &  94.4 / \textbf{94.6} &  94.9 / \textbf{94.9} \\
          & Uniform &  99.0 / \textbf{99.9} &    100 / \textbf{100} &  98.7 / \textbf{99.8} \\
          & Gaussian &  99.1 / \textbf{99.9} &    100 / \textbf{100} &  98.7 / \textbf{99.8} \\
          & SVHN &  99.3 / \textbf{99.6} &  98.3 / \textbf{98.9} &  96.9 / \textbf{97.6} \\
\bottomrule
\end{tabular}

\end{table}

\begin{table}[t]
\centering
\small
\caption{OOD detection with ImageNet-30 as the IND set. Results show that the pNML outperforms the Baseline~\citep{hendrycks17baseline} method up to 9.8\% and 8.23\% for ResNet-18 and ResNet-101 models respectively. See \secref{sec:results} for more details.}
\label{tab:aruoc_imagnet30}
\vspace{0.1mm}
\begin{tabular}{clcc}
\toprule
IND & OOD &        ResNet-18 &   ReseNet-101  \\
\midrule 
 & & Baseline/+pNML & Baseline/+pNML \\ \cmidrule{3-4} 
\multirow{8}{*}{ImageNet-30} 
    & iSUN      &  95.58 / \textbf{99.74}   &  96.26 / \textbf{99.54}   \\
    & LSUN (R)  &  95.51 / \textbf{99.72}   &  95.77 / \textbf{99.43}   \\
    & LSUN (C)  &  96.89 / \textbf{99.77}   &  98.00 / \textbf{99.86}   \\
    & Uniform   &  99.35 / \textbf{99.99}   &   98.70 / \textbf{100}    \\
    & Gaussian  &  98.78 / \textbf{100}     &   98.61 / \textbf{100}    \\
    & SVHN      &  99.18 / \textbf{99.99}   &  98.94 / \textbf{99.98}   \\
    & CIFAR-10  &  89.99 / \textbf{99.79}   &   91.24 / \textbf{99.47}  \\
    & CIFAR-100 &  92.15 / \textbf{92.15}   &  93.39 / \textbf{99.58}   \\
\bottomrule
\end{tabular}

\end{table}

\section{Conclusions}
\label{sec:conclusions}
We derived the analytical expression of the pNML regret for a single layer NN. 
We showed that the model generalizes well when the test data resides in either a subspace spanned by the eigenvectors associated with the large eigenvalues of the training data correlation matrix or far from the decision boundary.
We showed how to apply the pNML for any pretrained DNN that uses the softmax layer with neither additional parameters nor extra data.
We demonstrated the effectiveness of our pNML regret--based approach on 74 IND-OOD detection benchmarks. Compared with the recent state of the art methods, our approach elevates absolute AUROC values by up to 15.6\%

\label{sec:limitations}
For future work, in our regret derivation we constrained the genie hypothesis set to the Gaussian family and a single layer NN. We would like to extend the derivation to larger hypothesis sets and more layers, believing it would improve performance.
Furthermore, the pNML regret can be used for additional tasks such as active learning, probability calibration, and adversarial attack detection.

\minisection{Societal impacts} \label{sec:social_impacts}
The negative social impact of this work depends on the application. For instance, in a surveillance camera scenario, a person from a minority group can be flagged as OOD if the minority group was not included in the training set.
We recommend that when OOD is flagged a human will intervene rather than an algorithmic response since the nature of the OOD is unknown.

\FloatBarrier
\bibliographystyle{apalike}
\bibliography{main}

\setcounter{equation}{0}
\setcounter{figure}{0}
\setcounter{table}{0}
\setcounter{page}{1}

\appendix
\onecolumn
\title{Single Layer Predictive Normalized Maximum Likelihood for Out-of-Distribution Detection \\  --Supplementary material--}

\maketitle

\section{MSE minimization is equivalent to log-loss minimization}
\label{appendix:logloss}
We use the same notations as in \secref{sec:pnml_regret}.

Denote $e_c$ as a one-hot row vector of the true label, we define the hypothesis set that genie is allowed to choose from as
\begin{equation}
P_\Theta = \left\{p_\theta(y|x) = \frac{1}{\sqrt{2 \pi \sigma^2}} \exp \left\{-\frac{1}{2\sigma^2}\left[\left(y - f(x_n^\top \theta)\right)e_c^\top\right]^2 \right\} \right\}.
\end{equation}
The genie chooses the learner from the hypothesis set that minimizes the log-loss.
Let $x_n \in \mathcal{R}^{M \times 1}$ be the $n$-th data with the label $c_n \in \{1,2,\dots,C\}$,
$y_n$ be a row vector where $y_{n c_n}$ is its $c_n$ element.
We show that the log-loss minimizer of this hypothesis set is equal to the MSE minimizer:
\begin{equation}
\begin{split}
\argmin_{\theta \in \mathcal{R}^{M \times C}} \ell(p_\theta,X_N,Y_N) 
&= 
\argmin_{\theta \in \mathcal{R}^{M \times C}} \left[
- \log \prod_{n=1}^N \frac{1}{\sqrt{2 \pi \sigma^2}} \exp \left\{-\frac{1}{2\sigma^2}\left(y_{n c_n} - f(x_n^\top \theta)_{c_n} \right)^2 \right\} 
\right]
\\ &= 
\argmin_{\theta \in \mathcal{R}^{M \times C}}
\sum_{n=1}^N \left(y_{n c_n} - f(x_n^\top \theta)_{c_n}\right)^2.
\end{split}
\end{equation}
We know that the training set label are one-hot vector $y_n=e_{c_n}$ such that $y_{n c_n} = 1$:
\begin{equation}
\begin{split}
\argmin_{\theta \in \mathcal{R}^{M \times C}} \ell(p_\theta,X_N,Y_N) 
&= 
\argmin_{\theta \in \mathcal{R}^{M \times C}}
\sum_{n=1}^N \left(1 - f(x_n^\top \theta)_{c_n} \right)^2
=
\argmin_{\theta \in \mathcal{R}^{M \times C}}
\sum_{n=1}^N \norm{1 - f(x_n^\top \theta) e_{c_n}^\top}_2^2
\\ &=
\argmin_{\theta \in \mathcal{R}^{M \times C}}
\sum_{n=1}^N \norm{1 - y_n f(x_n^\top \theta)}_2^2
\end{split}
\end{equation}
which is the MSE minimization objective we defined in \secref{sec:pnml_regret}.

\begin{figure}[tbh]
    \centering
    \includegraphics[width=0.5\linewidth]{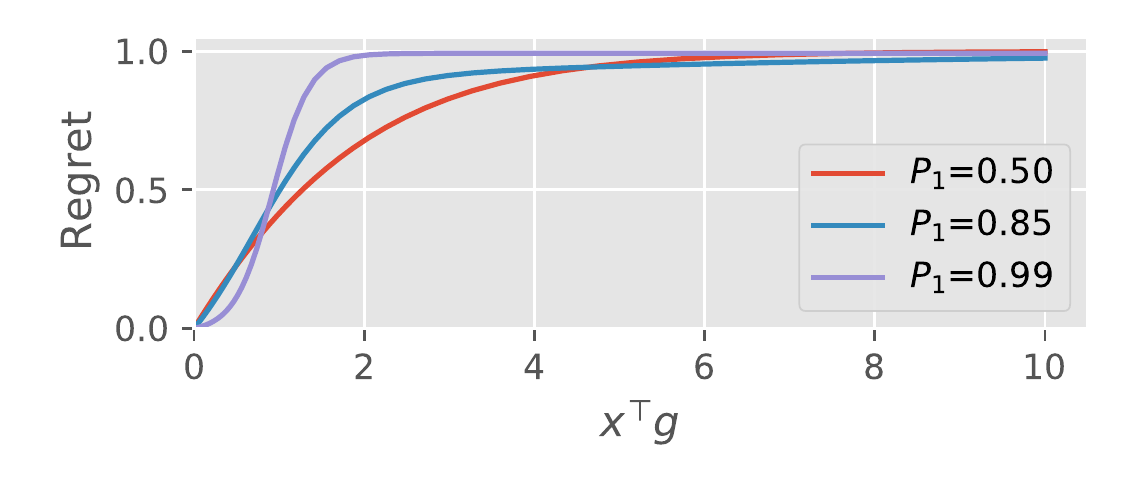}
    \caption{The pNML regret for a two class predictor. $p_1$ is the ERM prediction of class $c_1$.}
    \label{fig:regret_simulation}
\end{figure}

\section{Single layer NN pNML regret simulation}
\label{appendix:regret_simulation}

We simulate the response of the pNML regret for two classes (C=2) and divide it by $\log C$ to have the regret bounded between 0 and 1.
\Figref{fig:regret_simulation} shows the regret behaviour for different $p_1$ (the ERM probability assignment of class 1) as a function of $x^\top g$.

For an ERM model that is certain on the prediction ($p_1=0.99$ that is represented by the purple curve), a slight variation of $x^\top g$ causes a large response of the regret comparing to $p_1$ that equals 0.55 and 0.85. 
All curves converging to the maximal regret for $x^\top g$ greater than 6.

\section{The spectrum of real dataset} \label{appendix:spectrum}

We provide a visualization of the training data spectrum when propagated to the last layer of a DNN.

We feed the training data through the model up to the last layer to create the training embeddings. 
Next, we compute the correlation matrix of the training embeddings and perform an SVD decomposition.
We plot the eigenvalues for different training sets in~\figref{fig:svd}. 

\Figref{fig:DenseNet_svd} shows the eigenvalues of DenseNet-BC-100 model when ordered from the largest to smallest.
For the SVHN training set, most of the energy is located in the first 50 eigenvalues and then there is a significant decrease of approximately $10^3$.
The same phenomenon is also seen in~\figref{fig:DenseNet_svd} that shows the eigenvalues of ResNet-40 model.
In our derived regret, if the test sample is located in the subspace that is associated with small eigenvalues (for example indices 50 or above for DenseNet trained with SVHN) then $x^\top g$ is large and so is the pNML regret.

For both DensNet and ResNet models, the values of the eigenvalues of CIFAR-100 seem to be spread more evenly compared to CIFAR-10, and the CIFAR-10 are more uniform than the SVHN. 
How much the eigenvalues are spread can indicate the variability of the set: SVHN is a set of digits that is much more constrained than CIFAR-100 which has 100 different classes.

\begin{figure}[tb]
    \centering
\begin{subfigure}[t]{0.49\linewidth}
    \includegraphics[width=1.0\textwidth]{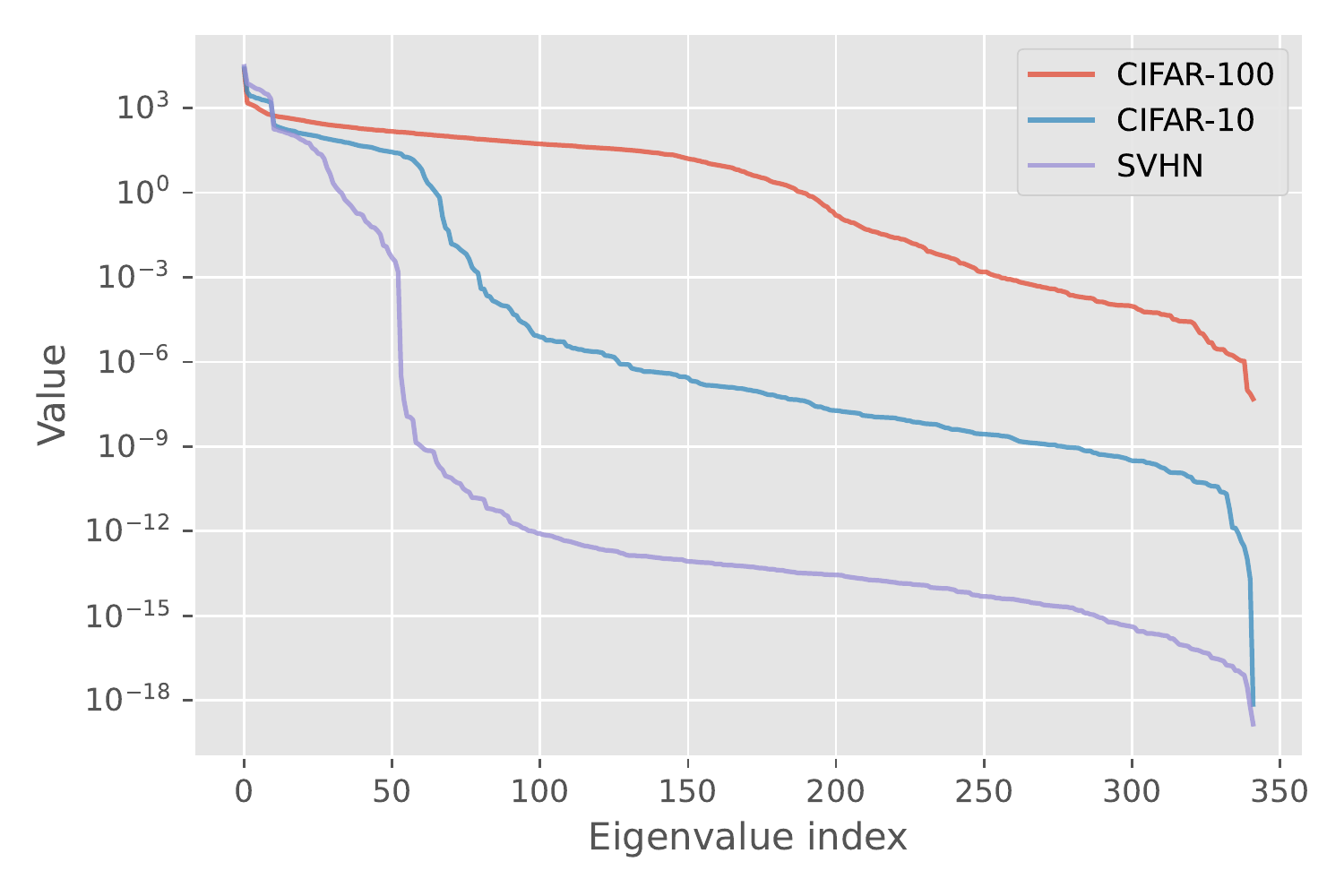}
    \caption{DenseNet-BC-100  \label{fig:DenseNet_svd}}   
\end{subfigure}
\begin{subfigure}[t]{0.49\linewidth}
    \includegraphics[width=1.0\textwidth]{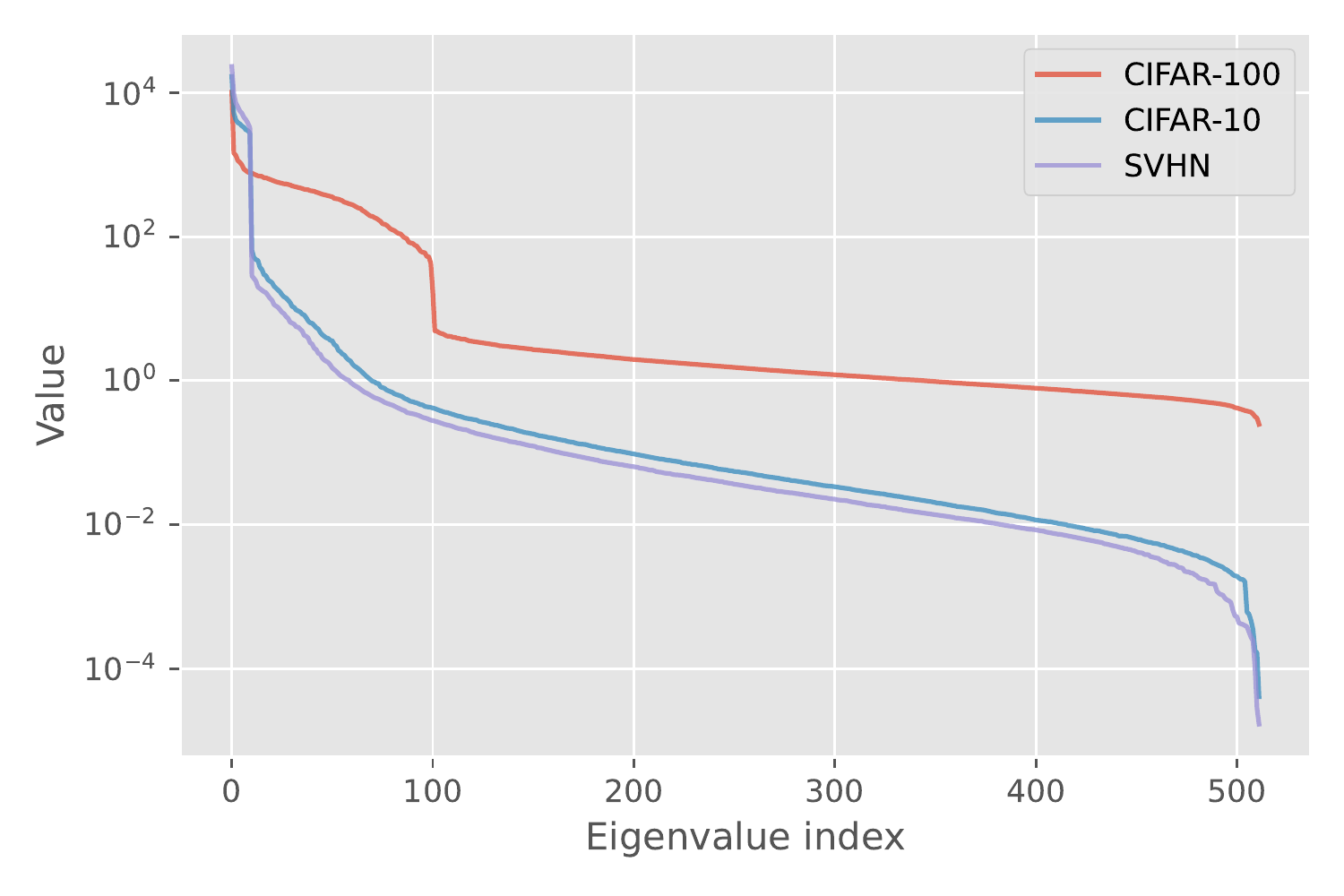}
    \caption{ResNet-40 \label{fig:ResNet_svd}}   
\end{subfigure}
\caption{The spectrum of the training embeddings.}
\label{fig:svd}
\end{figure}

\section{Gram vs. Gram+pNML}
\label{appendix:gram_vs_gram_plus_pnml}
We further explore the benefit of the pNML regret in detecting OOD samples over the Gram approach.
We focus on the DenseNet model with CIFAR-100 as the training set and LSUN (C) as the OOD set.

\Figref{fig:ind_hist} shows the 2D histogram of the IND set based on the pNML regret values and Gram scores. 
In addition, we plotted the best threshold for separating the IND and OOD of these sets.
pNML regret values less than 0.0024 and Gram scores below 0.0017 qualify as IND samples by both the pNML and Gram scores. 
Gram and Gram+pNML do not succeed to classify 1205 and 891 out of a total 10,000 IND samples respectively.

\Figref{fig:ood_hist} presents the 2D histogram of the LSUN (C) as OOD set. 
For regret values greater than 0.0024 and Gram score lower than 0.0017, the pNML succeeds to classify as IND but the Gram fails: 
There are 473 samples that the pNML classifies as OOD but the Gram fails, in contrast to 76 samples classified as such by the Gram and not by the pNML regret.
Most of the pNML improvement is in assigning a high score to OOD samples while there is not much change in the rank of the IND ones.

\begin{figure}[tb]
    \centering
\begin{subfigure}[t]{0.49\linewidth}
    \includegraphics[width=1.0\textwidth]{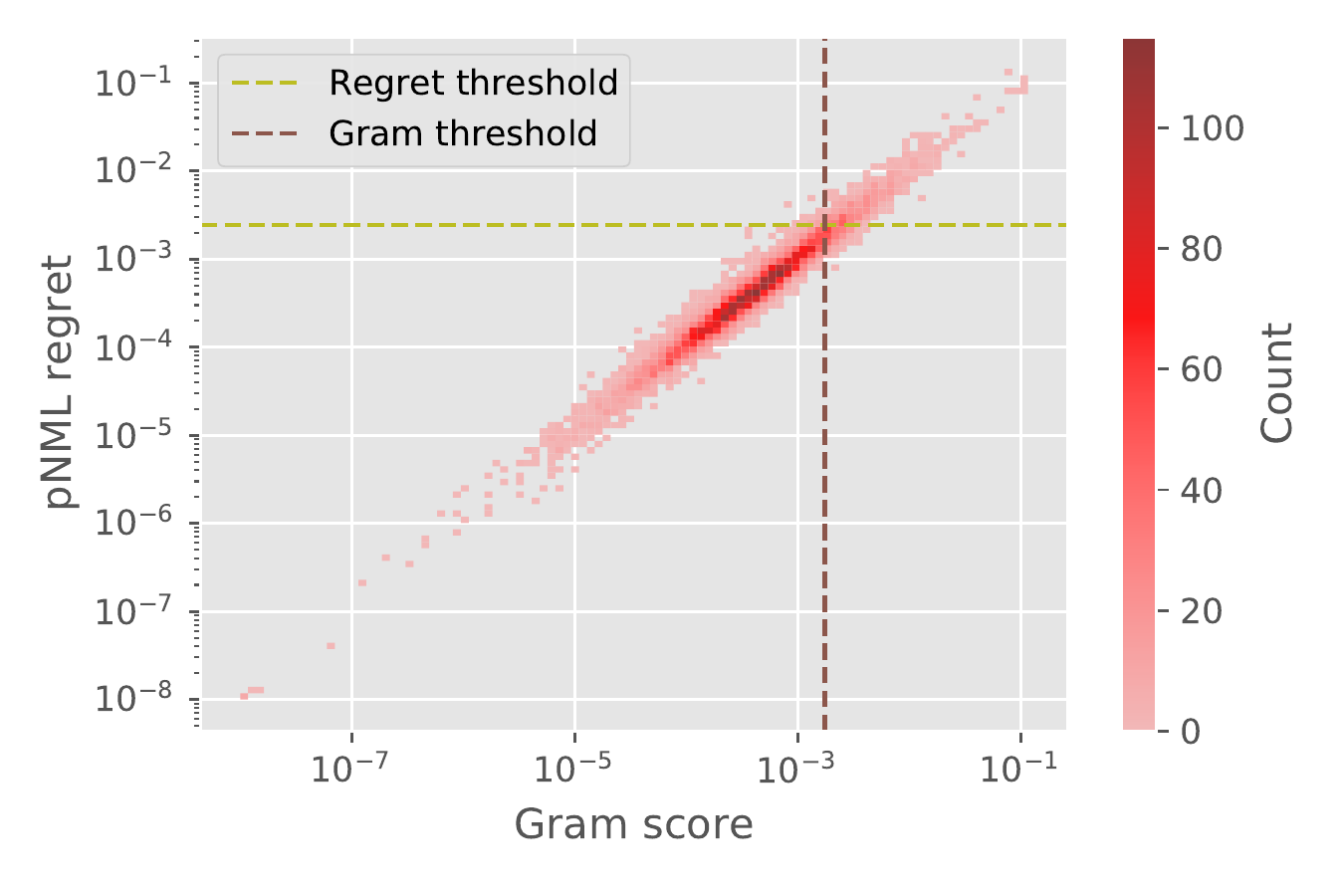}
    \caption{IND 2D histogram  \label{fig:ind_hist}}   
\end{subfigure}
\begin{subfigure}[t]{0.49\linewidth}
    \includegraphics[width=1.0\textwidth]{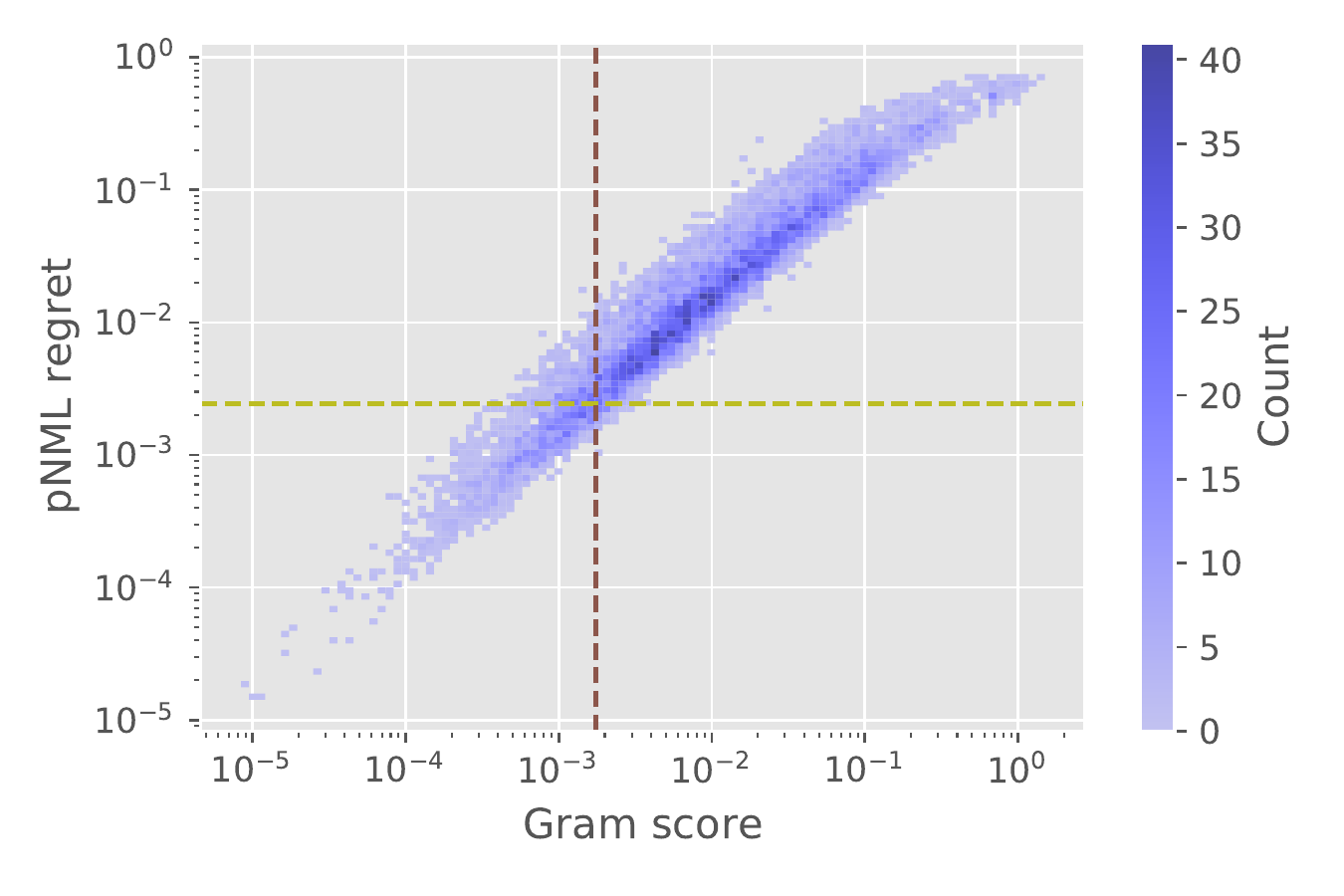}
    \caption{OOD 2D histogram \label{fig:ood_hist}}   
\end{subfigure}
\caption{2D histogram of the pNML regret and the Gram score of a DenseNet model trained with CIFAR-100 as IND set and LSUN (C) as OOD.}
\label{fig:histograms}
\end{figure}

\begin{table}[tb]
\centering
\fontsize{8}{9}\selectfont
\caption{DenseNet-BC-100 model TNR at TPR95\% comparison. The compared methods are Baseline~\citep{hendrycks17baseline}, ODIN~\citep{liang2017enhancing}, Gram~\citep{gram}, and OECC~\citep{PAPADOPOULOS2021138}}
\label{tab:TNRatTPR95_densenet}
\begin{tabular}{clcccc}
\toprule
IND & OOD &        Baseline/+pNML &            ODIN/+pNML &            Gram/+pNML &            OECC/+pNML \\

\midrule
\multirow{8}{*}{CIFAR-100} & iSUN &  14.8 / \textbf{81.2} &  37.4 / \textbf{82.8} &  95.8 / \textbf{97.9} &  97.5 / \textbf{99.2} \\
     & LSUN (R) &  16.4 / \textbf{82.7} &  41.6 / \textbf{84.5} &  97.1 / \textbf{98.7} &  98.4 / \textbf{99.6} \\
     & LSUN (C) &  28.3 / \textbf{65.7} &  58.2 / \textbf{65.4} &  65.3 / \textbf{76.3} &  74.6 / \textbf{83.4} \\
     & Imagenet (R) &  17.3 / \textbf{86.4} &  43.0 / \textbf{87.9} &  95.6 / \textbf{98.0} &  96.5 / \textbf{99.0} \\
     & Imagenet (C) &  24.3 / \textbf{77.2} &  52.5 / \textbf{78.6} &  88.8 / \textbf{93.8} &  92.6 / \textbf{96.9} \\
     & Uniform &    0.0 / \textbf{100} &    0.0 / \textbf{100} &    100 / \textbf{100} &    100 / \textbf{100} \\
     & Gaussian &    0.0 / \textbf{100} &    0.0 / \textbf{100} &    100 / \textbf{100} &    100 / \textbf{100} \\
     & SVHN &  26.2 / \textbf{79.2} &  56.8 / \textbf{79.0} &  89.3 / \textbf{93.7} &  89.0 / \textbf{90.7} \\
\midrule
\multirow{8}{*}{CIFAR-10} & iSUN &  63.3 / \textbf{93.2} &  94.0 / \textbf{94.3} &  99.1 / \textbf{99.8} &   99.7 / \textbf{100} \\
     & LSUN (R) &  66.9 / \textbf{94.2} &  \textbf{96.2} / 95.8 &  99.5 / \textbf{99.9} &   99.8 / \textbf{100} \\
     & LSUN (C) &  52.0 / \textbf{79.9} &  74.6 / \textbf{80.2} &  88.7 / \textbf{94.4} &  95.7 / \textbf{99.6} \\
     & Imagenet (R) &  59.4 / \textbf{93.4} &  92.5 / \textbf{94.6} &  98.8 / \textbf{99.6} &  99.3 / \textbf{99.9} \\
     & Imagenet (C) &  57.0 / \textbf{87.1} &  86.9 / \textbf{88.3} &  96.8 / \textbf{98.7} &  98.6 / \textbf{99.8} \\
     & Uniform &   76.4 / \textbf{100} &    100 / \textbf{100} &    100 / \textbf{100} &    100 / \textbf{100} \\
     & Gaussian &   88.1 / \textbf{100} &    100 / \textbf{100} &    100 / \textbf{100} &    100 / \textbf{100} \\
     & SVHN &  40.4 / \textbf{92.2} &  77.0 / \textbf{95.0} &  96.0 / \textbf{98.2} &  98.5 / \textbf{99.9} \\
\midrule
\multirow{9}{*}{SVHN} & iSUN &  78.3 / \textbf{93.6} &  78.5 / \textbf{96.3} &  99.6 / \textbf{99.9} &    100 / \textbf{100} \\
     & LSUN (R) &  77.1 / \textbf{91.7} &  77.0 / \textbf{95.2} &   99.7 / \textbf{100} &    100 / \textbf{100} \\
     & LSUN (C) &  73.5 / \textbf{89.7} &  68.5 / \textbf{90.0} &  93.4 / \textbf{97.2} &   99.5 / \textbf{100} \\
     & Imagenet (R) &  79.7 / \textbf{93.6} &  79.0 / \textbf{95.8} &  99.2 / \textbf{99.8} &    100 / \textbf{100} \\
     & Imagenet (C) &  78.9 / \textbf{92.8} &  77.6 / \textbf{94.5} &  98.0 / \textbf{99.3} &   99.9 / \textbf{100} \\
     & Uniform &   66.1 / \textbf{100} &   71.7 / \textbf{100} &    100 / \textbf{100} &    100 / \textbf{100} \\
     & Gaussian &  88.7 / \textbf{99.7} &   95.6 / \textbf{100} &    100 / \textbf{100} &    100 / \textbf{100} \\
     & CIFAR-10 &  69.1 / \textbf{81.0} &  66.6 / \textbf{88.5} &  75.1 / \textbf{86.8} &   98.9 / \textbf{100} \\
     & CIFAR-100 &  68.7 / \textbf{81.4} &  65.7 / \textbf{88.5} &  80.3 / \textbf{90.1} &   99.1 / \textbf{100} \\
\bottomrule
\end{tabular}

\end{table}

\begin{table}[tbh]
\centering
\fontsize{8}{9}\selectfont
\caption{DenseNet-BC-100 model Detection Acc. comparison. The compared methods are Baseline~\citep{hendrycks17baseline}, ODIN~\citep{liang2017enhancing}, Gram~\citep{gram}, and OECC~\citep{PAPADOPOULOS2021138}}
\label{tab:DetectionAcc._densenet}
\begin{tabular}{clcccc}
\toprule
IND & OOD &        Baseline/+pNML &            ODIN/+pNML &            Gram/+pNML &            OECC/+pNML \\

\midrule
\multirow{8}{*}{CIFAR-100} & iSUN &  64.0 / \textbf{89.9} &  76.5 / \textbf{90.3} &  95.6 / \textbf{97.0} &  96.5 / \textbf{98.0} \\
     & LSUN (R) &  65.0 / \textbf{90.5} &  77.7 / \textbf{91.0} &  96.3 / \textbf{97.4} &  97.2 / \textbf{98.5} \\
     & LSUN (C) &  72.6 / \textbf{85.3} &  83.4 / \textbf{85.2} &  83.7 / \textbf{87.5} &  87.0 / \textbf{90.2} \\
     & Imagenet (R) &  65.7 / \textbf{91.6} &  77.3 / \textbf{92.1} &  95.5 / \textbf{97.0} &  96.0 / \textbf{97.8} \\
     & Imagenet (C) &  69.0 / \textbf{89.0} &  80.8 / \textbf{89.3} &  92.4 / \textbf{94.5} &  94.0 / \textbf{96.1} \\
     & Uniform &   64.2 / \textbf{100} &   85.0 / \textbf{100} &    100 / \textbf{100} &   99.9 / \textbf{100} \\
     & Gaussian &   58.8 / \textbf{100} &   66.9 / \textbf{100} &    100 / \textbf{100} &    100 / \textbf{100} \\
     & SVHN &  75.5 / \textbf{90.3} &  86.0 / \textbf{90.3} &  92.3 / \textbf{94.4} &  92.1 / \textbf{93.0} \\
\midrule
\multirow{8}{*}{CIFAR-10} & iSUN &  89.2 / \textbf{94.2} &  94.6 / \textbf{94.8} &  98.0 / \textbf{99.0} &  98.7 / \textbf{99.6} \\
     & LSUN (R) &  90.2 / \textbf{94.7} &  \textbf{95.6} / 95.5 &  98.6 / \textbf{99.3} &  98.9 / \textbf{99.7} \\
     & LSUN (C) &  86.9 / \textbf{89.5} &  \textbf{89.7} / 89.4 &  92.1 / \textbf{94.8} &  95.5 / \textbf{98.8} \\
     & Imagenet (R) &  88.5 / \textbf{94.3} &  94.0 / \textbf{94.9} &  97.9 / \textbf{98.8} &  98.3 / \textbf{99.2} \\
     & Imagenet (C) &  88.0 / \textbf{91.9} &  \textbf{92.3} / 92.2 &  96.2 / \textbf{97.7} &  97.4 / \textbf{99.0} \\
     & Uniform &   94.8 / \textbf{100} &   99.7 / \textbf{100} &    100 / \textbf{100} &    100 / \textbf{100} \\
     & Gaussian &   95.3 / \textbf{100} &   99.8 / \textbf{100} &    100 / \textbf{100} &    100 / \textbf{100} \\
     & SVHN &  83.2 / \textbf{94.0} &  88.1 / \textbf{95.1} &  95.8 / \textbf{97.3} &  97.4 / \textbf{99.3} \\
\midrule
\multirow{9}{*}{SVHN} & iSUN &  89.7 / \textbf{94.6} &  87.7 / \textbf{95.7} &  98.3 / \textbf{99.1} &   99.8 / \textbf{100} \\
     & LSUN (R) &  89.2 / \textbf{93.8} &  87.2 / \textbf{95.1} &  98.6 / \textbf{99.2} &   99.9 / \textbf{100} \\
     & LSUN (C) &  88.0 / \textbf{92.8} &  83.6 / \textbf{92.8} &  94.3 / \textbf{96.4} &  98.5 / \textbf{99.8} \\
     & Imagenet (R) &  90.2 / \textbf{94.4} &  88.2 / \textbf{95.5} &  97.9 / \textbf{98.9} &   99.7 / \textbf{100} \\
     & Imagenet (C) &  89.8 / \textbf{94.2} &  87.6 / \textbf{94.8} &  96.7 / \textbf{98.1} &   99.5 / \textbf{100} \\
     & Uniform &  87.9 / \textbf{98.8} &  85.2 / \textbf{99.4} &   99.9 / \textbf{100} &    100 / \textbf{100} \\
     & Gaussian &  93.6 / \textbf{98.4} &  95.4 / \textbf{99.1} &    100 / \textbf{100} &    100 / \textbf{100} \\
     & CIFAR-10 &  86.5 / \textbf{91.0} &  83.5 / \textbf{92.7} &  89.0 / \textbf{92.0} &  97.4 / \textbf{99.8} \\
     & CIFAR-100 &  86.5 / \textbf{91.0} &  83.1 / \textbf{92.8} &  90.4 / \textbf{93.2} &  97.7 / \textbf{99.8} \\
\bottomrule
\end{tabular}

\end{table}

\begin{table}[tbh]
\centering
\fontsize{8}{9}\selectfont
\caption{ResNet-34 model TNR at TPR95\% comparison. The compared methods are Baseline~\citep{hendrycks17baseline}, ODIN~\citep{liang2017enhancing}, Gram~\citep{gram}, and OECC~\citep{PAPADOPOULOS2021138}}
\label{tab:TNRatTPR95_resnet}
\begin{tabular}{clcccc}
\toprule
IND & OOD &        Baseline/+pNML &            ODIN/+pNML &            Gram/+pNML &            OECC/+pNML \\

\midrule
\multirow{8}{*}{CIFAR-100} & iSUN &  16.6 / \textbf{26.1} &  \textbf{45.4} / 44.1 &  94.7 / \textbf{95.7} &  97.2 / \textbf{98.0} \\
     & LSUN (R) &  18.4 / \textbf{28.4} &  \textbf{45.5} / 44.6 &  96.6 / \textbf{97.1} &  98.3 / \textbf{99.0} \\
     & LSUN (C) &  18.2 / \textbf{30.1} &  44.0 / \textbf{51.2} &  64.6 / \textbf{72.9} &  80.3 / \textbf{89.8} \\
     & Imagenet (R) &  20.2 / \textbf{31.8} &  \textbf{48.7} / 47.6 &  94.8 / \textbf{96.2} &  95.5 / \textbf{95.8} \\
     & Imagenet (C) &  23.9 / \textbf{33.6} &  44.4 / \textbf{48.1} &  88.3 / \textbf{91.6} &  90.6 / \textbf{91.6} \\
     & Uniform &  10.1 / \textbf{89.1} &  98.4 / \textbf{98.5} &    100 / \textbf{100} &    100 / \textbf{100} \\
     & Gaussian &   0.0 / \textbf{13.7} &   4.5 / \textbf{66.8} &    100 / \textbf{100} &    100 / \textbf{100} \\
     & SVHN &  19.9 / \textbf{52.0} &  63.8 / \textbf{75.0} &  80.3 / \textbf{89.0} &  86.8 / \textbf{89.2} \\
\midrule
\multirow{8}{*}{CIFAR-10} & iSUN &  44.5 / \textbf{78.5} &  73.0 / \textbf{86.3} &  99.4 / \textbf{99.9} &   99.8 / \textbf{100} \\
     & LSUN (R) &  45.1 / \textbf{79.8} &  73.5 / \textbf{87.5} &  99.6 / \textbf{99.9} &   99.9 / \textbf{100} \\
     & LSUN (C) &  48.0 / \textbf{72.6} &  63.1 / \textbf{76.1} &  90.2 / \textbf{95.9} &  96.3 / \textbf{98.9} \\
     & Imagenet (R) &  44.0 / \textbf{72.8} &  71.8 / \textbf{81.9} &  98.9 / \textbf{99.6} &  99.6 / \textbf{99.8} \\
     & Imagenet (C) &  45.9 / \textbf{71.4} &  66.5 / \textbf{78.0} &  97.0 / \textbf{98.8} &  98.9 / \textbf{99.7} \\
     & Uniform &   71.4 / \textbf{100} &    100 / \textbf{100} &    100 / \textbf{100} &    100 / \textbf{100} \\
     & Gaussian &   90.2 / \textbf{100} &    100 / \textbf{100} &    100 / \textbf{100} &    100 / \textbf{100} \\
     & SVHN &  32.2 / \textbf{69.1} &  81.9 / \textbf{90.8} &  97.6 / \textbf{99.2} &  99.3 / \textbf{99.7} \\
\midrule
\multirow{9}{*}{SVHN} & iSUN &  77.0 / \textbf{85.6} &  79.1 / \textbf{90.6} &  99.5 / \textbf{99.9} &    100 / \textbf{100} \\
     & LSUN (R) &  74.4 / \textbf{82.9} &  76.6 / \textbf{88.3} &  99.6 / \textbf{99.9} &    100 / \textbf{100} \\
     & LSUN (C) &  76.1 / \textbf{86.3} &  78.5 / \textbf{86.4} &  94.5 / \textbf{98.4} &  99.3 / \textbf{99.9} \\
     & Imagenet (R) &  79.0 / \textbf{88.0} &  80.8 / \textbf{92.5} &  99.4 / \textbf{99.8} &    100 / \textbf{100} \\
     & Imagenet (C) &  80.4 / \textbf{88.4} &  82.4 / \textbf{91.5} &  98.6 / \textbf{99.7} &   99.9 / \textbf{100} \\
     & Uniform &  85.2 / \textbf{95.6} &  86.1 / \textbf{99.3} &    100 / \textbf{100} &    100 / \textbf{100} \\
     & Gaussian &  84.8 / \textbf{94.9} &  90.9 / \textbf{99.4} &    100 / \textbf{100} &    100 / \textbf{100} \\
     & CIFAR-10 &  78.3 / \textbf{87.2} &  79.9 / \textbf{90.4} &  86.1 / \textbf{97.2} &  98.4 / \textbf{99.8} \\
     & CIFAR-100 &  76.9 / \textbf{85.8} &  78.5 / \textbf{89.1} &  87.6 / \textbf{96.9} &  98.4 / \textbf{99.8} \\
\bottomrule
\end{tabular}

\end{table}

\begin{table}[tbh]
\centering
\fontsize{8}{9}\selectfont
\caption{ResNet-34 model Detection Acc. comparison. The compared methods are Baseline~\citep{hendrycks17baseline}, ODIN~\citep{liang2017enhancing}, Gram~\citep{gram}, and OECC~\citep{PAPADOPOULOS2021138}}
\label{tab:DetectionAcc._resnet}
\begin{tabular}{clcccc}
\toprule
IND & OOD &        Baseline/+pNML &            ODIN/+pNML &            Gram/+pNML &            OECC/+pNML \\

\midrule
\multirow{8}{*}{CIFAR-100} & iSUN &  70.1 / \textbf{76.0} &  78.6 / \textbf{79.3} &  95.0 / \textbf{95.4} &  96.2 / \textbf{96.9} \\
     & LSUN (R) &  69.8 / \textbf{76.5} &  78.1 / \textbf{79.8} &  96.0 / \textbf{96.2} &  96.9 / \textbf{97.6} \\
     & LSUN (C) &  69.4 / \textbf{76.0} &  75.7 / \textbf{79.9} &  84.3 / \textbf{87.4} &  89.3 / \textbf{92.8} \\
     & Imagenet (R) &  70.8 / \textbf{76.6} &  80.2 / \textbf{80.2} &  95.0 / \textbf{95.7} &  95.4 / \textbf{95.5} \\
     & Imagenet (C) &  72.5 / \textbf{78.2} &  78.7 / \textbf{80.2} &  92.1 / \textbf{93.6} &  93.2 / \textbf{93.6} \\
     & Uniform &  81.7 / \textbf{93.5} &  96.7 / \textbf{96.8} &    100 / \textbf{100} &    100 / \textbf{100} \\
     & Gaussian &  60.5 / \textbf{83.7} &  81.7 / \textbf{92.2} &    100 / \textbf{100} &    100 / \textbf{100} \\
     & SVHN &  73.2 / \textbf{82.9} &  88.1 / \textbf{89.0} &  89.5 / \textbf{92.6} &  91.8 / \textbf{92.7} \\
\midrule
\multirow{8}{*}{CIFAR-10} & iSUN &  85.0 / \textbf{90.4} &  86.9 / \textbf{92.0} &  98.2 / \textbf{99.1} &  98.8 / \textbf{99.0} \\
     & LSUN (R) &  85.3 / \textbf{90.8} &  87.1 / \textbf{92.4} &  98.7 / \textbf{99.3} &  99.1 / \textbf{99.2} \\
     & LSUN (C) &  86.2 / \textbf{90.0} &  87.2 / \textbf{88.7} &  92.8 / \textbf{95.6} &  95.7 / \textbf{97.2} \\
     & Imagenet (R) &  84.9 / \textbf{89.0} &  86.3 / \textbf{90.4} &  97.9 / \textbf{98.8} &  98.5 / \textbf{98.7} \\
     & Imagenet (C) &  85.3 / \textbf{89.4} &  86.3 / \textbf{89.9} &  96.3 / \textbf{97.7} &  97.5 / \textbf{98.3} \\
     & Uniform &  93.5 / \textbf{98.8} &  99.3 / \textbf{99.9} &    100 / \textbf{100} &    100 / \textbf{100} \\
     & Gaussian &  95.5 / \textbf{99.7} &   99.8 / \textbf{100} &    100 / \textbf{100} &    100 / \textbf{100} \\
     & SVHN &  85.1 / \textbf{90.3} &  89.1 / \textbf{93.0} &  96.8 / \textbf{98.1} &  98.1 / \textbf{98.4} \\
\midrule
\multirow{9}{*}{SVHN} & iSUN &  89.7 / \textbf{92.8} &  89.2 / \textbf{93.5} &  98.2 / \textbf{99.1} &  99.7 / \textbf{99.9} \\
     & LSUN (R) &  88.9 / \textbf{92.1} &  88.2 / \textbf{92.7} &  98.6 / \textbf{99.2} &  99.8 / \textbf{99.9} \\
     & LSUN (C) &  89.7 / \textbf{92.2} &  89.2 / \textbf{92.2} &  94.8 / \textbf{97.3} &  98.0 / \textbf{98.9} \\
     & Imagenet (R) &  90.4 / \textbf{93.4} &  90.0 / \textbf{94.2} &  98.0 / \textbf{99.1} &  99.5 / \textbf{99.8} \\
     & Imagenet (C) &  91.0 / \textbf{93.3} &  90.6 / \textbf{93.8} &  97.1 / \textbf{98.7} &  99.2 / \textbf{99.6} \\
     & Uniform &  92.9 / \textbf{95.7} &  92.3 / \textbf{97.4} &   99.9 / \textbf{100} &    100 / \textbf{100} \\
     & Gaussian &  92.9 / \textbf{95.4} &  93.0 / \textbf{97.5} &    100 / \textbf{100} &    100 / \textbf{100} \\
     & CIFAR-10 &  90.0 / \textbf{93.1} &  89.4 / \textbf{93.4} &  92.2 / \textbf{96.2} &  96.9 / \textbf{98.5} \\
     & CIFAR-100 &  89.6 / \textbf{92.5} &  89.0 / \textbf{93.1} &  92.4 / \textbf{96.1} &  97.0 / \textbf{98.5} \\
\bottomrule
\end{tabular}

\end{table}

\section{Additional out of distribution metrics}
\label{appendix:ood_results}

The additional OOD metrics, TNR at 95\% FPR and Detection Accuracy, for the DensNet model are shown in \tableref{tab:TNRatTPR95_densenet} and \tableref{tab:DetectionAcc._densenet} respectively and for the ResNet are presented in \tableref{tab:TNRatTPR95_resnet} and \tableref{tab:DetectionAcc._resnet}.
We improve the compared methods for all IND-OOD sets except for 6 experiments of ODIN method with the TNR at 95\% metric.
We show the TNR vs FPR of these experiments in \figref{fig:tpr_vs_tnr}.
We state that for most of the TNR values, the pNML regret outperforms the ODIN method, as also shown in the AUROC metric.

\begin{figure}[tb]
\centering
\begin{subfigure}[t]{0.49\linewidth}
    \includegraphics[width=1.0\textwidth]{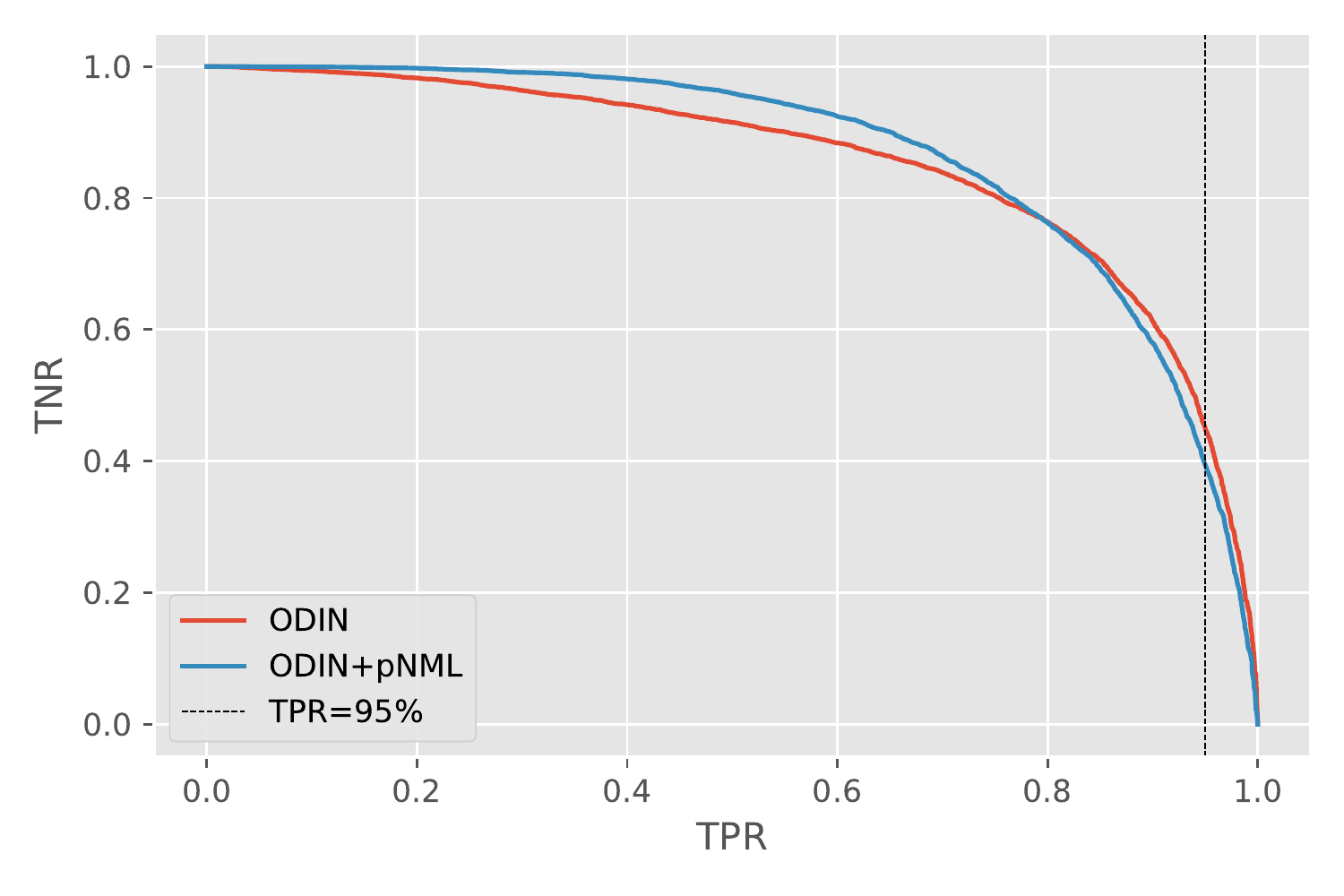}
    \caption{ResNet CIFAR-100 iSUN  \label{fig:tpr_tnr_odin_resnet_cifar100_iSUN}}
\end{subfigure}
\begin{subfigure}[t]{0.49\linewidth}
    \includegraphics[width=1.0\textwidth]{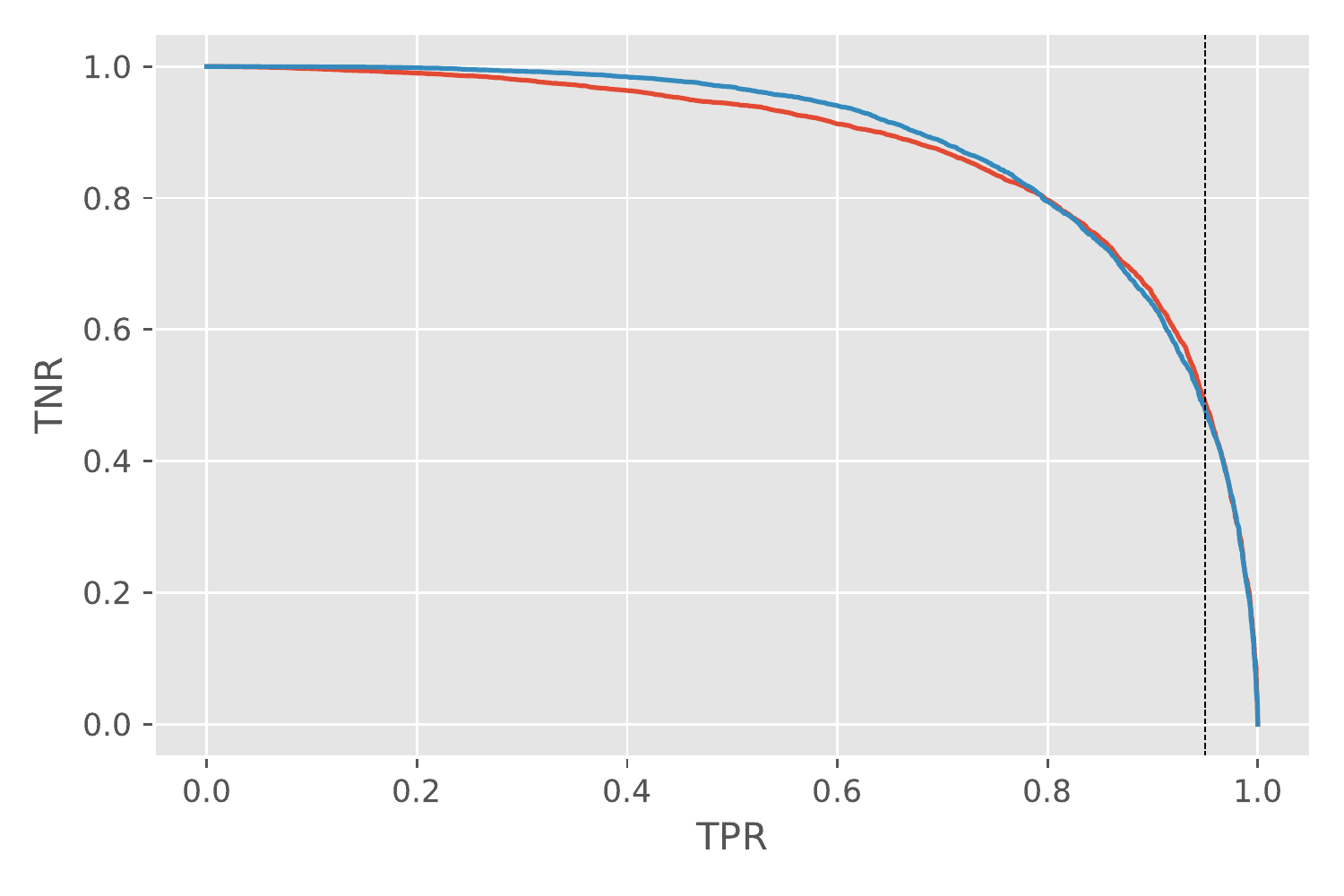}
    \caption{ResNet CIFAR-100 Imagenet (R) \label{fig:tpr_tnr_odin_resnet_cifar100_imagenet_resize}}
\end{subfigure}
\begin{subfigure}[t]{0.49\linewidth}
    \includegraphics[width=1.0\textwidth]{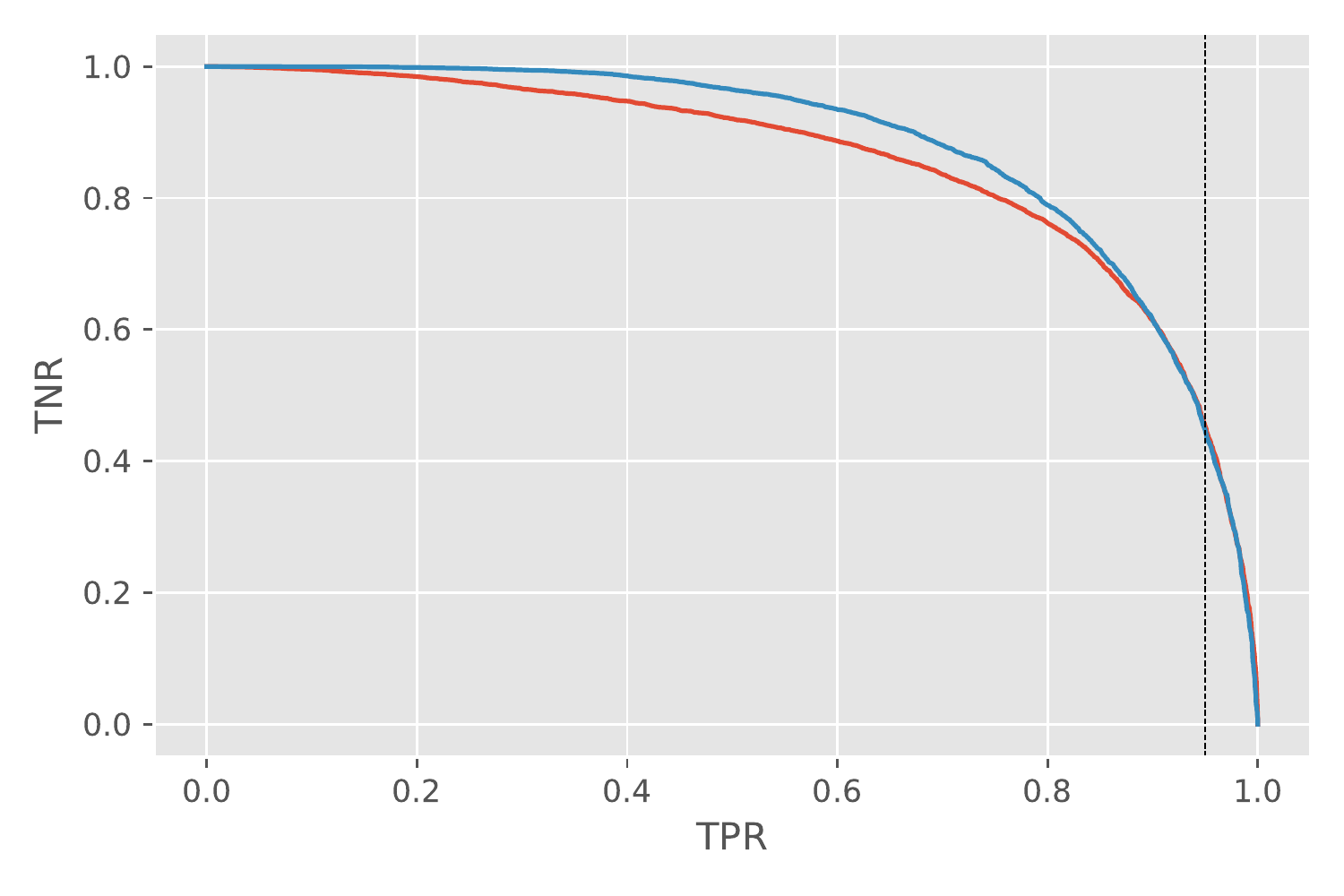}
    \caption{ResNet CIFAR-100 LSUN (R)  \label{fig:tpr_tnr_odin_resnet_cifar100_LSUN_resize}}
\end{subfigure}
\begin{subfigure}[t]{0.49\linewidth}
    \includegraphics[width=1.0\textwidth]{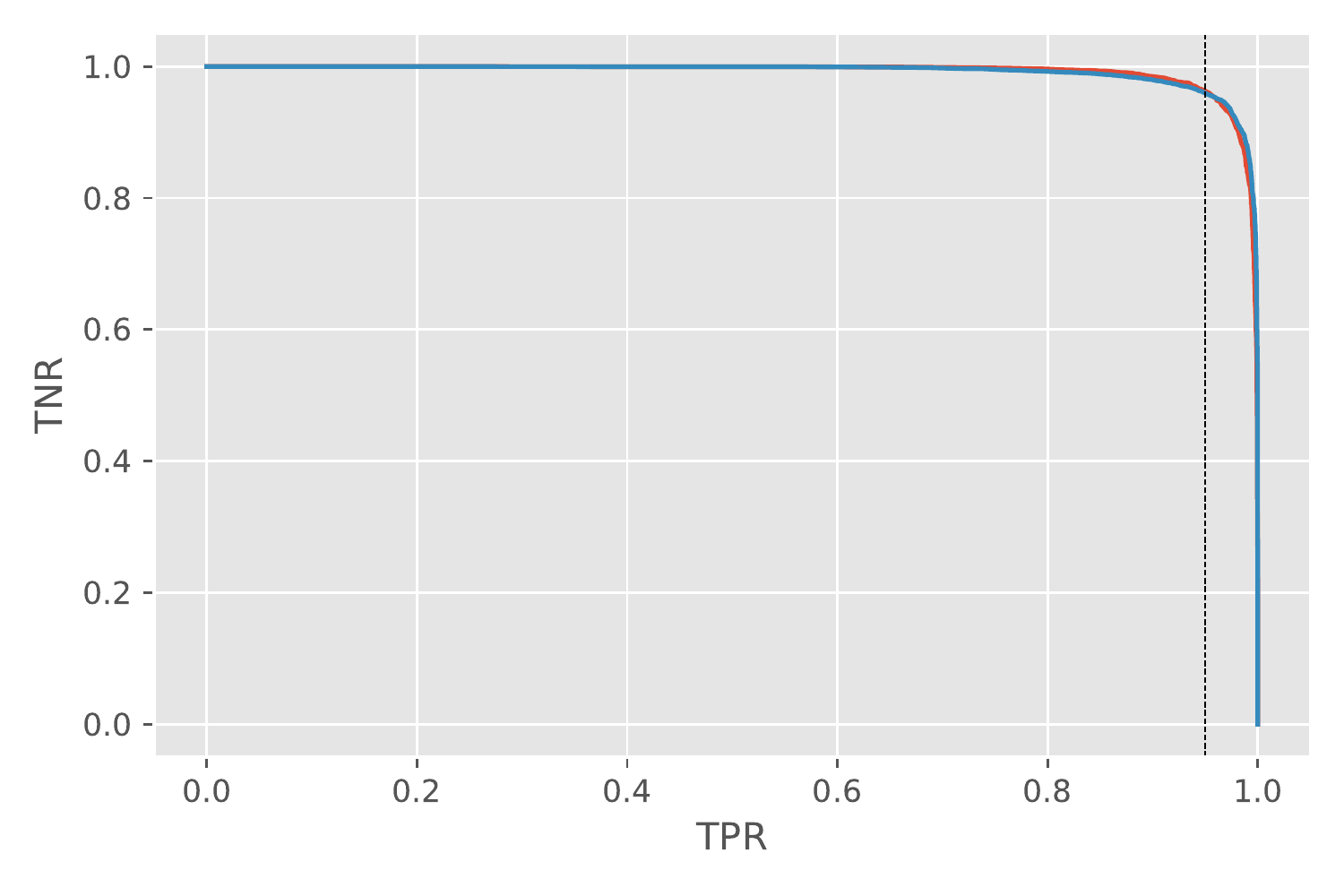}
    \caption{DenseNet CIFAR-10 LSUN (R) \label{fig:tpr_tnr_odin_densenet_cifar10_LSUN_resize}}
\end{subfigure}
\caption{The TNR as a function of the TPR of IND-OOD sets for which the ODIN method is better than the pNML at TPR of 95\%.}
\label{fig:tpr_vs_tnr}
\end{figure}

\end{document}